\definecolor{d_green}{RGB}{0,200,0}
\definecolor{d_red}{RGB}{200,0,00}
\begin{document}

\begin{frontmatter}

\title{Learning Task Automata for Reinforcement Learning \\ Using Hidden Markov Models\thanks{Accepted to the 26th European Conference on Artificial Intelligence (ECAI 2023). All authors are listed alphabetically.}}

\author[A]{\fnms{Alessandro}~\snm{Abate}}
\author[A,B]{\fnms{Yousif}~\snm{Almulla}}
\author[A]{\fnms{James}~\snm{Fox}}
\author[A]{\fnms{David}~\snm{Hyland}}
\author[A]{\fnms{Michael}~\snm{Wooldridge}}

\address[A]{University of Oxford}
\address[B]{Microsoft Azure Quantum}
\address[]{\{aabate, james.fox, david.hyland, mjw\}@cs.ox.ac.uk, yalmulla@microsoft.com}

\begin{abstract}
Training reinforcement learning (RL) agents using scalar reward signals is often infeasible when an environment has sparse and non-Markovian rewards. Moreover, handcrafting these reward functions before training is prone to misspecification. We learn non-Markovian finite task specifications as finite-state `task automata' from episodes of agent experience within environments with unknown dynamics. First, we learn a product MDP, a model composed of the specification's automaton and the environment's MDP (both initially unknown), by treating it as a partially observable MDP and employing a hidden Markov model learning algorithm. Second, we efficiently distil the task automaton (assumed to be a deterministic finite automaton) from the learnt product MDP. Our automaton enables a task to be decomposed into sub-tasks, so an RL agent can later synthesise an optimal policy more efficiently. It is also an interpretable encoding of high-level task features, so a human can verify that the agent's learnt tasks have no misspecifications. Finally, we also take steps towards ensuring that the automaton is environment-agnostic, making it well-suited for use in transfer learning. 
\end{abstract}

\end{frontmatter}

\section{Introduction}\label{sec:intro}

Reinforcement Learning (RL) can be prohibitively slow (sample inefficient) at learning an optimal policy when the reward signal is sparse and non-Markovian because of the credit assignment problem. 
However, this setting is common: any task where no reward is given until a sequence of sequential sub-tasks is completed.
Consider a medical procedure where the ordering of sub-task completion matters or where one can only enter a room if one already has the key.

Three existing approaches for improving learning in this setting are hierarchical RL \citep{pateria2021hierarchical}, which allows agents to plan at various levels of abstraction; transfer learning, which utilises knowledge learnt from similar tasks \citep{taylor2007cross}; and temporal logic planning approaches, which guide the agent's exploration by focusing it on the MDP fragment that satisfies a linear temporal logic (LTL) property \citep{hasanbeig2018logically, thiebaux2006decision, jothimurugan2021compositional}. The latter is similar to Icarte et al \cite{rmfulltoro2020}'s reward machines (finite-state machines that represent non-Markovian reward functions) because an LTL property is often represented as an automaton.

Task specifications are often separable, so an automaton (or reward machine) allows an optimal policy to be found more efficiently by breaking the task into Markovian sub-tasks. Recent work (see Section \ref{sec:related}) has recognised that this automaton is usually a priori unknown, so they learn it concurrently with finding an optimal policy. We focus on \emph{how to best learn the task automaton (TA) representing a task specification in sparse, non-Markovian reward environments.} To our knowledge, ours is the first model-based approach; our pipeline learns a model of the underlying MDP along with the TA. The aforementioned work on temporal logic planning, sample-efficient model-based RL approaches \citep{wang2019benchmarking}, or other methods \citep{baier2008principles} can then synthesise an optimal policy.

\paragraph{Contributions:} Our algorithmic pipeline learns both a task specification as a TA (encoded as a deterministic finite automaton (DFA)) and a model of the MDP, from episodes of agent experience within an environment with unknown dynamics and sparse, non-Markovian reward. This improves sample efficiency in three ways: a TA exposes a task specification's separability, so it allows for solving sub-tasks independently; our model-based approach helps the agent learn an optimal policy more efficiently; and we remove environmental bias from the learned TA for better transfer learning~\citep{taylor2007cross}.

We learn the TA via an intermediate product MDP structure, composed of the environment's `\emph{spatial}' MDP and the task specification's TA. The product MDP is partially observable -- the agent only observes its state in the spatial MDP and its reward, not the TA-state (its progress through the unknown task). 
The product MDP is learnt using the Baum-Welch algorithm \citep{baum1966statistical} by first learning an estimate of the spatial MDP using a uniform prior. Then, this learnt spatial MDP is used as an inductive bias for learning the full product MDP. In Section \ref{sec:exp}, we show that our approach is more efficient than existing SAT-based approaches to learning TAs \citep{deepsynth,xu2020joint,verginis2022joint,abadi2020learning,corazza2022reinforcement} by comparing with Biermann and Feldman's \cite{biermann1972synthesis} SAT-based algorithm.

Once the product MDP is learnt, distilling the TA via our `Cone Lumping' method is computationally cheap. This makes the product MDP useful for transfer learning. If the environment changes, the agent only needs to update the affected part of the product MDP before efficiently re-distilling the~TA. 

This paper also addresses the reward misspecification problem \citep{amodei2016concrete} by using a direct task specification to encode a complex, non-Markovian task rather than a scalar reward function. This is because scalar reward functions are brittle with respect to small environmental changes \citep{vazquez2018learning}, do not signal a task's separability, and can be difficult for a human to interpret and verify whether they result in the desired behaviour. In contrast, our learnt TA reveals the full task specification. Since our pipeline works in unknown environments where many high-level tasks may exist, agents using our pipeline can learn multiple tasks in one TA. Although we focus on deterministic TAs in this paper (i.e., DFAs), our approach can be straightforwardly extended to learning probabilistic automata \citep{rabin1963probabilistic, dohmen2022inferring}. 

Although the paper is self-contained, for brevity, we relegate many proposition proofs to our supplementary material, Appendix A.

\section{Setup}
An agent's interaction with its environment can be viewed at many levels of abstraction, amongst which a low and fine-grained description is often modelled by a \emph{Markov Decision Process (MDP)}. However, agents can also recognise higher-level features of the environment (e.g., chairs or tables). These are included in a \emph{labelled MDP} (in Section \ref{sec:MDPenv}) using a set of (atomic) propositional variables from an alphabet of labels $\mathcal{AP}$, which are assigned truth values at every MDP state via a labelling function~$L$. Task specifications refer to sequential compositions of these high-level features and can be encoded as TAs (in Section \ref{sec:taskspec}), which graphically represent the structure of tasks that an agent can get reward from completing. A TA and the MDP can be combined into a single structure, a product MDP (in Section \ref{sec:productMDP}) to provide a holistic representation of the agent's exploration.   

\subsection{MDP Environment}
\label{sec:MDPenv}

\begin{definition}
  \label{def:MDP}
  A \emph{Markov Decision Process (MDP)} is a tuple $\mathcal{M} = (\mathcal{S}, \mathcal{A}, s_0, P, R, \gamma)$, where $\mathcal{S}$ and $\mathcal{A}$ are finite sets of states and actions, $s_0$ is the initial state (or distribution over states), $P \colon \mathcal{S}\times \mathcal{A} \times \mathcal{S} \rightarrow [0,1]$ is the transition probability distribution, $R: (\mathcal{S} \times \mathcal{A})^+ \times \mathcal{S} \rightarrow \reals$ is a (non-Markovian) reward function\footnote{This must encompass the initial transition, as evidenced by the use of the non-empty repetition operator $+$.}, and $\gamma \in [0,1)$ is the discount factor. A \emph{labelled MDP} is a tuple $\mathcal{M} = (\mathcal{S}, \mathcal{A}, s_0, P, R, \gamma, \mathcal{AP}, L)$; an MDP enriched with $\mathcal{AP}$, a finite alphabet of atomic propositions (labels), and $L \colon \mathcal{S} \rightarrow 2^\mathcal{AP}$, a labelling function. 
\end{definition} 

A sequence of states and actions $s_0,a_0,s_1\ldots,a_{n-1},s_n$ is called a \emph{trajectory}. A \emph{state} or \emph{action trajectory} is the restriction to just the sequence of states  $(s_t)_{t=0}^n \in \mathcal{S}^+$ or actions $(a_t)_{t=0}^n \in \mathcal{A}^+$. 
A state trajectory is \emph{attainable} if ${\prod_{t=0}^n P(s_{t+1}\mid s_t, a_t) > 0}$ for some action trajectory $(a_t)_{t=0}^n$, and \emph{impossible} otherwise. A trajectory is accompanied by a \emph{reward sequence} $(r_t)_{t=0}^n$ and a sequence of labels, called a \emph{trace}, $(\ell_t)_{t=0}^n \in (2^\mathcal{AP})^*$, where $L(s_t) = \ell_t$ for all $t \leq n$. A \emph{(history-based) policy} ${\pi \colon (\mathcal{S} \times \mathcal{A})^+ \times \mathcal{A} \to [0,1]}$ is a function mapping trajectories to probability distributions over actions, where $\pi(a_t\mid s_0,a_0,\ldots, s_t)$ gives the probability of the agent choosing action $a_t$ at state $s_t$ given the (historical) trajectory $s_0,a_0,\ldots, s_t$. \emph{Memoryless policies} are a special case where the distribution over actions only depends on the last state (standard in RL because storing entire histories is intractable). The agent starts at $s_0$ in each exploration episode and, at each time step $t$, in state $s_t$, it selects action $a_t$ with probability $\pi(a_t \mid s_t)$, before transitioning to a new state $s_{t+1}$ with probability $P(s_{t+1} \mid s_t, a_t)$ whilst receiving a reward ${r_t = R(s_0, a_0,\ldots,s_t, a_t, s_{t+1})}$. The agent's task is to find a policy $\pi$ that maximises the expected sum of discounted reward $\mathbb{E}_\pi\left[\Sigma^n_{t=0}\gamma^t r_t\right]$.
A policy $\pi$ is \emph{fully mixed} if every action at every state is chosen with non-zero probability (i.e., $\pi(a \mid s)>0$ for all $s \in \mathcal{S}$ and $a \in \mathcal{A}$). We assume the agent observes state trajectories, traces, and reward sequences, as it explores the labelled MDP.

Given the initial state $s_0$, fixing the agent's policy $\pi$ in an MDP $\mathcal{M}$ induces a Markov chain (MC) $\mathcal{M}_\pi$ if for all state transitions, the reward function's output is independent of the last action (i.e., $R(\zeta, a_i, s_{t+1}) = R(\zeta, a_j, s_{t+1})$ for all $a_i, a_j \in A$, where $\zeta \in (\mathcal{S} \times \mathcal{A})^+$). Definition \ref{def:MDP}'s reward function is \emph{non-Markovian} because it depends upon the full trajectory up until that point. A \emph{Markovian reward function} $R: \mathcal{S} \times \mathcal{A} \times \mathcal{S} \rightarrow \reals$ only depends on the last transition. State-based rewards could also be handled straightforwardly.

\begin{example}\label{example}
  An RL agent is in Figure \ref{fig:casemdp}'s environment. The task is to collect coffee \faCoffee~for the guest on a couch \faCouch~before turning on the TV \faTv~and then ascending the stairs~$\textstairs$ (Figure \ref{fig:casedfa}). The carpet \faBorderNone~and book \faBook~do not affect whether the agent can achieve its goal. The arrow next to the silhouette indicates the initial state.
  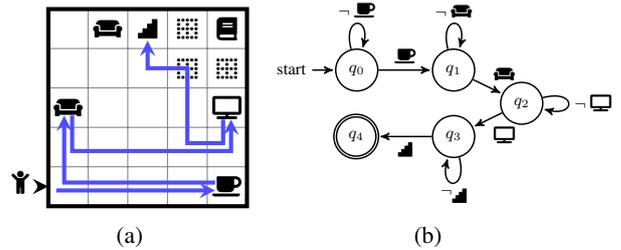
\begin{figure}[tb]
    \centering
   \begin{subfigure}[b]{.4\linewidth}
   \centering
   \vspace{-1em}
   \resizebox{1\linewidth}{!}{
   \begin{tikzpicture}[scale=0.54]
    \draw[step=1cm,gray] (0,0) grid (5, 5);
    
    \node at (4.5,0.5) {\faCoffee};

    
    \node at (3.5,4.5) {\faBorderNone};
    \node at (4.5,4.5) {\faBook};
    \node at (3.5,3.5) {\faBorderNone};
    \node at (4.5,3.5) {\faBorderNone};

    
    \node at (4.5,2.5) {\faTv};
    \node at (1.5,4.5) {\faCouch};
    \node at (0.5,2.5) {\faCouch};

    \node at (-0.7,.6) {\agent};
    
    \node at (2.5,4.5) {$\stairwayupfilled$};
    
    \draw[ultra thick] (0,0) rectangle (5, 5);
    
    \path[ultra thick,->, >=stealth] (-.28,.5) edge node [left] {} (0,.5);
    
    \draw[ultra thick, ->, >=stealth, draw=blue!70!white] (0.2,0.4) -- (4.2,0.4);
    \draw[ultra thick, ->, >=stealth, draw=blue!70!white] (4.2,0.6) -- (0.4,0.6) -- (0.4,2.3);
    \draw[ultra thick, ->, >=stealth, draw=blue!70!white] (0.6,2.3) -- (0.6,1.4) -- (4.6,1.4) -- (4.6,2.2);
    \draw[ultra thick, ->, >=stealth, draw=blue!70!white] (4.4,2.2) -- (4.4,1.6) -- (3.5, 1.6) -- (3.5,3.5) -- (2.5,3.5) -- (2.5,4.2);
\end{tikzpicture}
   }
   \subcaption{}
   \label{fig:casemdp}
   \end{subfigure}
   \begin{subfigure}[b]{.49\linewidth}
    \vspace{-1em}
   \resizebox{1.1\linewidth}{!}{
    \begin{tikzpicture}[->,>=stealth',shorten >=1pt,auto,node distance=1cm, thick] 

        \node[state,initial] (q_0)   {$q_0$}; 
        \node[state] (q_1) [right=of q_0] {$q_1$}; 
        \node[state] (q_2) [below right = 0.05cm and 0.7cm of q_1] {$q_2$}; 
        \node[state] (q_3) [below left =  0.05cm and 0.7cm of q_2] {$q_3$}; 
        \node[state,accepting] (q_4) [left =of q_3] {$q_4$}; 
        
        \path[->] 
        (q_0) edge [loop above] node {$\neg$ \faCoffee} ()   	
        (q_0) edge  node {\faCoffee} (q_1)
        
        (q_1) edge [loop above] node {$\neg$ \faCouch} ()   	
        (q_1) edge  node {\faCouch} (q_2)
        
        (q_2) edge [loop right] node {$\neg$ \faTv} ()   	
        (q_2) edge  node {\faTv} (q_3)
        
        (q_3) edge [loop below] node {$\neg \stairwayupfilled$} ()
        (q_3) edge node {$\stairwayupfilled$} (q_4);
        
    \end{tikzpicture}}
    \subcaption{}
   \label{fig:casedfa}
   \end{subfigure}
   \caption{ (a) The labelled MDP environment and (b) 5 state TA representing a task specification for Example 1.}
   \label{fig:casestudy}
 \end{figure}
 \end{example}
Example 1's MDP environment is shown in Figure \ref{fig:casemdp}. The agent's possible actions $\mathcal{A}$ are to move in the adjacent up, down, left, and right directions; the MDP's state space cardinality $\vert \mathcal{S} \vert$ is 25; and the transition dynamics are initially unknown (as discussed later, stochasticity in the dynamics derives from randomised action execution). The set of (atomic) propositional variables $\mathcal{AP}\equiv \{\text{\faCoffee,\faBorderNone,\faCouch,\faTv,\faBook,}\textstairs\}$ represent the high-level features of the environment. Empty cells are implicitly labelled $L(s) = \emptyset$ (i.e., the MDP state $s \in \mathcal{S}$ has no high-level environmental feature of interest). 

If the agent only gets a positive reward upon completing the full task specification, this cannot be expressed with a Markovian reward function with respect to the MDP's state space $\mathcal{S}$ because the agent cannot go between states labelled \faCoffee~and \faCouch~in one timestep, let alone complete all four subtasks. So, the agent must learn an optimal policy in an environment with a non-Markovian reward function that provides sparse feedback.
This challenge of planning using full \emph{trajectories} can be surmounted by recognising that the agent also observes \emph{traces} of high-level features in $\mathcal{AP}$. By maintaining a memory of complete subtasks (e.g., coffee collected) at this higher-level of abstraction (with smaller dimension), the agent can overcome the `curse of dimensionality' and synthesise an optimal policy more 
efficiently by exploiting the compositionality of many tasks.

\subsection{Task Specification}
\label{sec:taskspec}

\emph{Linear temporal logic (LTL)} or \emph{Linear Dynamic Logic (LDL)}, evaluated with respect to finite trajectories ($\text{LTL}_f$/$\text{LDL}_f$), is often used to express a task specification as temporally extended goals, or constraints on plans because this mitigates the sample inefficiency of RL in non-Markovian environments \citep{hasanbeig2018logically}.
In fact, Littman et al \cite{littman2017environment} argue that 
task specifications are superior to reward functions for specifying behaviour, as they are environment independent. We exploit the fact that the expressive power of both $\text{LDL}_f$ and $\text{LTL}_f$ is subsumed by the class of regular expressions, so they can be represented by deterministic or nondeterministic finite automata (DFAs/NFAs)~\citep{baier2008principles}.

\begin{definition}
  \label{def:DFA_NFA}
   A \emph{Non-Deterministic Finite Automaton (NFA)} is a tuple $\mathcal{N} = (\mathcal{Q}, q_0, \Sigma, \delta, \mathcal{F})$, where $\mathcal{Q}$ is a finite set of states, $q_0 \in \mathcal{Q}$ is the initial state, $\Sigma$ is a finite alphabet, $\delta: \mathcal{Q}\times \Sigma \rightarrow 2^\mathcal{Q}$ is a non-deterministic transition function, and $\mathcal{F}\subseteq \mathcal{Q}$ is the set of accepting states. If $|\delta(q, \alpha)|=1$ for all $q \in \mathcal{Q}$ and $\alpha \in \Sigma$, then $\mathcal{N}$ is deterministic and is called a \emph{Deterministic Finite Automaton (DFA)}.
\end{definition}

Let $\Sigma^*$ be the set of all finite words over $\Sigma$. A finite word $w = \alpha_1\alpha_2\ldots\alpha_n \in \Sigma^*$ is accepted by an NFA $\nfa$ if there exists a finite run $u_0,..., u_n \in \mathcal{Q}^*$ where $u_0 = q_0$, $u_{i+1} \in \delta(u_{i}, \alpha_{i+1})$ for $0 \leq i < n$, and $u_{n} \in \mathcal{F}$. 
In a DFA, only one path follows $w$, as each transition is deterministic, so the final state must be \emph{accepting}, otherwise the word is \emph{rejected}. The set of all accepted words is the language recognised by $\nfa$, denoted by $\mathcal{L}(\nfa)$. A DFA is trivially an NFA and any NFA can be converted into an equivalent DFA using Rabin and Scott \cite{rabin1959finite}'s subset construction method, so NFAs and DFAs both recognise the class of regular languages.

\begin{definition}
\sloppy A \emph{Task Automaton (TA)} $\auto$ is a DFA $(\mathcal{Q}, q_0, \Sigma, \delta, \mathcal{F})$ with $\Sigma = 2^\mathcal{AP}$, where $\mathcal{AP}$ is a set of propositional variables inherited from the labelled MDP $\mathcal{M}$.
\end{definition}

We define a TA as a DFA to make our presentation as clear as possible. However, other types of automaton may also be used, such as Icarte et al \cite{rmfulltoro2020}'s reward machines which, rather than having a set of accepting states (as in DFAs), outputs a reward function at each state $\delta_r : \mathcal{Q} \rightarrow [\mathcal{S} \times \mathcal{A} \times \mathcal{S} \rightarrow \reals]$. Our statistical approach to learning TAs also allows for probabilistic TAs \citep{rabin1963probabilistic}, where rewards exhibit both non-Markovian and stochastic dynamics.

The TA $\auto$ keeps a `memory' of traces $(\ell_t)_{t=0}^n \in (2^\mathcal{AP})^+$ that the exploring agent observes by transitioning according to the current truth assignment of the propositional variables. A trace is \emph{MDP-attainable} if there exists a state trajectory $(s_t)_{t=0}^n$ such that $L(s_t)=\ell_t$ for all $t$. The TA's accepting states $\mathcal{F}$ mark the satisfaction of some task specification. In other words, the agent completes a task via trace $(\ell_t)_{t=0}^n$ if, correspondingly, the TA starts 
in state $u_0 = q_0$, transitions according to $u_{t+1} = \delta(u_t, L(s_{t+1}))$ for $t \geq 0$, and $u_n \in \mathcal{F}$. 

The TA in Figure \ref{fig:casedfa} represents the task specification for Example~1 in terms of the high-level environmental features. Each move in the MDP is matched by a transition in the TA (recall that empty squares in Figure \ref{fig:casemdp} are labelled by $\emptyset$). For example, the (blue) marked trajectory in Figure \ref{fig:casemdp} represents an MDP-attainable accepting path in the TA and each `sub-path' denotes a TA `sub-task'. The sub-path $(\emptyset, \emptyset,\emptyset,\emptyset,\text{\faCoffee})$ denotes the path in the MDP that coresponds to traversing the $q_0 \rightarrow q_1$ edge in the TA. Note that the labels on self-loops in Figure \ref{fig:casedfa} (e.g., $\neg$\faCoffee) are syntactic sugar to denote all labels not associated with outgoing edges from that TA state. The TA is environment-agnostic because it applies to any labelled MDP that includes the set of all high-level features $\{\text{\faCoffee,\faCouch,\faTv,}\textstairs\}$ necessary for completing the corresponding task specification. Therefore, learning the TA can not only expedite policy synthesis within this environment, but it also helps transfer learning, where an agent wants to use what it knows about attaining reward in one environment to obtain reward in different environments -- e.g., if an agent learns it is necessary to pick up a key before opening a door in one house, then this is likely applicable to other houses.

\subsection{Product MDP}
\label{sec:productMDP}

\begin{definition}
  \label{def:prodmdp}
  Given an MDP without reward $\mathcal{M} \setminus R = (\mathcal{S}, \mathcal{A}, s_0, P)$ and a TA $\auto = (\mathcal{Q}, q_0, \Sigma, \delta, \mathcal{F})$, the \emph{product MDP} is the structure $\mathcal{M}\otimes \auto =  (\mathcal{S}^\otimes, \mathcal{A}, s_0^\otimes, P^\otimes, R^\otimes, \gamma)$, where $\mathcal{S}^\otimes = \mathcal{S}\times \mathcal{Q}$, $s_0^\otimes = \langle s_0,q_0\rangle$ is the initial state pair; $P^\otimes: \mathcal{S}^\otimes \times \mathcal{A} \times \mathcal{S}^\otimes \rightarrow [0,1]$ is the transition probability function, where $P^\otimes(\langle s_j,q_j\rangle\mid \langle s_i, q_i\rangle, a_i)=P(s_j \mid s_i, a_i)$ if $q_j=\delta(q_i,L(s_j))$ and 0 otherwise; and $R^\otimes: \mathcal{S}^\otimes \times \mathcal{A} \times \mathcal{S}^\otimes \rightarrow \{0,1\}$ is a reward function such that $R^\otimes(\langle s_t,q_t\rangle, a_t, \langle s_{t+1},q_{t+1}\rangle) = \chi_\mathcal{F}(q_{t+1})$, where $\chi_\mathcal{F}$ is the indicator function for the TA's set of accepting  states. Non-zero reward is obtained iff the agent transitions to an \emph{accepting state} of the product MDP (those related to an accepting TA state). Fixing any policy $\pi$ in the product MDP $\mathcal{M}\otimes \auto$, induces a \emph{product MC} $\mathcal{M}_\pi \otimes \auto$. 
\end{definition}

A product MDP $\mathcal{M}\otimes \auto$ more holistically represents the agent's exploration progress towards achieving a task because both the MDP $\mathcal{M}$ and the TA $\auto$ are undergoing transitions: in $\mathcal{M}$, these are the low-level dynamics over states in $\mathcal{S}$; in $\auto$, these represent the agent's progress towards completing a task specification. $\mathcal{M}\otimes \auto$'s reward function $R^\otimes$ is always Markovian because although a task may be non-Markovian with respect to $\mathcal{M}$, $\mathcal{M}\otimes \auto$ keeps track of what is happening at both $\mathcal{M}$'s and $\auto$'s level of abstraction. In other words, if $\mathcal{M}$ has a non-Markovian reward function $R$ representing a task encodable by $\auto$, then the non-Markovianity in $R$ can be resolved by synchronising $\mathcal{M}\otimes \auto$. The transitions in $\mathcal{M}\otimes \auto$ can be executed `under the hood' as the agent explores (i.e., no a priori knowledge about $\mathcal{M}\otimes \auto$ is required). Moreover, $\mathcal{M}\otimes \auto$ does not increase the agent's possible policy space because $\auto$ has deterministic transitions, so the trajectories (and therefore also policies) in $\mathcal{M}$ and $\mathcal{M}\otimes \auto$ are in a bijection. 

\section{Task Automaton Learning}
\label{sec:TAlearning}

We now give our three-step algorithmic pipeline for learning an MDP and a TA (i.e., the task specification) in unknown environments.

\begin{algorithm}[ht]
	\caption{Learning a TA in an unknown labelled MDP}
	\label{alg:alg}
	\begin{algorithmic}[1]
    \INPUT put agent into an (unknown) labelled MDP $\mathcal{M}$
    \OUTPUT TA $\auto^*$ that represents a task specification
    \State \textsc{ObsSeq} $\leftarrow$ collect episodes of corresponding trajectories, traces, and reward sequences.
    \Function{LearnProductMdp}{\textsc{ObsSeq}} \Comment \textbf{Step 1}
    \State \hskip\algorithmicindent \hspace*{-2.5em}Use an HMM/POMDP learning algorithm to learn estimates of \hskip\algorithmicindent \hspace*{0.5em}(i) the spatial MDP $\hat{\mathcal{M}}$'s transition probability distribution $\hat{P}$ \hskip\algorithmicindent \hspace*{0.5em}(ii) the product MDP~${\hat{\mathcal{M}}\otimes\hat{\auto}}$'s transition probability distribution $\hat{P}^\otimes$. 
    \EndFunction
    \Function{DistilTA}{$\hat{\mathcal{M}}\otimes\hat{\auto}$, \textsc{ObsSeq}} \Comment \textbf{Step 2}
        \State \hskip\algorithmicindent \hspace*{-2.5em}Determinise $\hat{\mathcal{M}}\otimes\hat{\auto}$ using Cone Lumping to return 
        the MDP-restricted TA $\hat{\auto}_{\hat{\mathcal{M}}}$
    \EndFunction
	\Function{PostProcess}{$\hat{\auto}_{\hat{\mathcal{M}}}$, \textsc{ObsSeq}} \Comment \textbf{Step 3}
	\State \hskip\algorithmicindent \hspace*{-2.5em}Remove environmental bias and minimise $\hat{\auto}_{\hat{\mathcal{M}}}$
	\EndFunction
	\end{algorithmic}
\end{algorithm}

\subsection{Step 1: Learn the Product MDP}
\label{sec:prodMDP}

Although the agent's interactions with its environment are fully modelled by the product MDP, the product MDP is only partially observable. The agent observes MDP states as well as the associated labels and rewards but does not observe the TA states it visits. So, Step 1 involves learning the product MDP's transition function $P^\otimes$ by viewing the agent's interaction with the (product MDP) environment as a partially-observable Markov Decision Process (POMDP). 
\begin{definition}
  \label{def:POMDP}
  A \emph{Partially-Observable Markov Decision Process (POMDP)} is a tuple $\mathcal{P} = (\mathcal{S}, \mathcal{A}, s_0, P, R, \gamma, \mathcal{O}, Z)$ where $(\mathcal{S}, \mathcal{A}, s_0, P, R, \gamma)$ is an MDP, $\mathcal{O}$ is an observation set, and $Z:  \mathcal{O}\times \mathcal{S}\rightarrow [0,1]$ is an observation probability function, where $Z(o \mid s)$ is the probability that observation $o\in\mathcal{O}$ is seen when the agent is at the  \emph{hidden} state $s\in \mathcal{S}$.  
\end{definition}

\begin{remark}
  \label{rem:POMDP}
  $\mathcal{P}^\otimes \coloneqq (\mathcal{S}^\otimes, \mathcal{A}, s_0^\otimes, P^\otimes, R^\otimes, \gamma, \mathcal{O}, Z)$ is a POMDP that models an agent interacting with a product MDP 
  $\mathcal{M}\otimes \auto = (\mathcal{S}^\otimes, \mathcal{A}, s_0^\otimes, P^\otimes, R^\otimes, \gamma)$, where the agent makes observations from the set $\mathcal{O} = \mathcal{S} \times \{0,1\}$ according to the deterministic observation function $Z : \mathcal{O}\times \mathcal{S}^\otimes \rightarrow \{0,1\}$. 
\end{remark}

Remark \ref{rem:POMDP} follows from Definitions \ref{def:prodmdp} and \ref{def:POMDP}. The agent's observations in the POMDP $\mathcal{P}^\otimes$ are from $\mathcal{S} \times \{0,1\}$, where the latter component is a binary value corresponding to the co-domain of the reward function $R^\otimes$. At each time step $t$, the agent transitions from a hidden state $s^\otimes_t = \langle s_t,q_t\rangle$ to $s^\otimes_{t+1} = \langle s_{t+1},q_{t+1}\rangle$ in the product MDP, but observes just $\big\langle s_{t+1}, \chi_\mathcal{F}(q_{t+1})\big\rangle$  with probability~1. The product MDP state $s_{t+1}^\otimes$ is \emph{partially obscured} from the agent. Recall $\chi_\mathcal{F}$ is the indicator function on the set of accepting states $\mathcal{F}$ in the TA $\auto$. If a policy $\pi$ is chosen by the agent interacting with the POMDP $\mathcal{P}^\otimes$, then the POMDP becomes a partially observable Markov chain $\mathcal{P}^\otimes_\pi$, known as a \emph{Hidden Markov Model (HMM)}~\citep{rabiner1989tutorial}. 

To learn the product MDP $\mathcal{M}\otimes \auto$, we can employ any algorithm for learning HMMs or POMDPs and apply it to $\mathcal{P}^\otimes$ or $\mathcal{P}^\otimes_\pi$. We use the Baum-Welch algorithm (BW) \citep{baum1966statistical}, due to its popularity and the ability to steer optimisation towards a desired local optimum (see Section \ref{sec:exp} and, for further details, Appendices C and~D). We learn $\mathcal{P}^\otimes_\pi$ in two stages. (i) Learn an estimate $\hat{P}$ of the true spatial MDP $\mathcal{M}$'s transition probability distribution $P$ using a fully mixed exploration policy $\pi$. This can be any fully mixed policy, but we chose a uniform random policy to assume no initial knowledge about the underlying model. Since we know $\pi$, we can always recover $P$ from $P_\pi$. (ii) Use this learnt estimate $\hat{P}$ as an inductive bias for learning the transition probability distribution $\hat{P}^\otimes$ of the full product MDP $\mathcal{M}\otimes \auto$. 

BW requires an initial guess on the number of hidden states ($\vert \mathcal{S}^\otimes \vert = \vert \mathcal{S} \vert \times \vert \mathcal{Q} \vert$). Since the agent can observe the MDP states, we only need guess the number of TA states $\hat{k}$. Furthermore, this guess only needs to be at least as high as the true $k$ (i.e., $\hat{k}\geq\mathcal{Q}$). If $\hat{k}$ is too high, it generates duplication in the learnt matrix, which is easily identified. For a sufficient upper bound for $\hat{k}$ one can: use the label set cardinality, use a TA simplicity assumption, or identify the minimum number of distinct labels needed for task completion using successful exploration episodes. We did the latter. Further details about the convergence criterion and other hyper-parameters (such as the episode length and the number of episodes) are in Section \ref{sec:exp}. 

To encode the learnt spatial MDP's transition probability distribution $\hat{P}$ as an inductive bias for learning $\hat{P}^\otimes$, we take the Kronecker product between a $\hat{k} \times \hat{k}$ identity matrix and $\hat{P}_\pi$ (the matrix induced from $\hat{P}$ and $\pi$) to form the basis of the initial estimate of the product MDP's transition matrix. For example, with a three-state TA:
\begin{equation*}
        I_{3\times 3} \otimes_K \hat{P}_\pi = 
        \begin{pmatrix}
        \hat{P}_\pi & \mathbf{0} & \mathbf{0}\\
        \mathbf{0} & \hat{P}_\pi & \mathbf{0}\\
        \mathbf{0} & \mathbf{0} & \hat{P}_\pi
        \end{pmatrix}.
\end{equation*}
Then, a small positive probability $\varepsilon > 0$ is added to each zero entry in the above construction to ensure that all expressions in the HMM learning procedure are well-defined. The probability $\varepsilon$ can take any positive value, but we chose a value inversely proportional to $\vert \mathcal{S}^\otimes \vert$ for consistency across experiments. Finally, each row is normalised so that the resulting initial estimate of the product MDP's transition matrix $\hat{P}^\otimes_\pi$ is a valid transition probability matrix.

Concentrating on learning $\hat{P}^\otimes$ (the process is the same for learning $\hat{P}$), BW begins with initial estimates for the hidden transition distribution $P^\otimes_\pi$ (an $\vert\mathcal{S}^\otimes\vert \times \vert\mathcal{S}^\otimes\vert$ matrix constructed from the inductive bias, as explained), the observation probability function $Z$ (an $\vert\mathcal{S}^\otimes\vert \times \vert\mathcal{O}\vert$ matrix), and the initial state distribution (a vector $\rho \in [0,1]^{\vert\mathcal{S}^\otimes\vert}$ defined as $\rho_j \equiv \Pr[s^\otimes_0 = s^\otimes_j]$). 
It then finds the $P^\otimes_\pi$ and $Z$ that maximises the likelihood of obtaining the set of observation sequences $O = (o_1,..., o_T) \in (\mathcal{O}^+)^T$:
\begin{equation*}
  \langle \hat{P}^\otimes_\pi, \hat{Z}\rangle = \argmax_{\langle P^\otimes,Z\rangle} \Pr\left(O \mid \langle P^\otimes_\pi,Z \rangle  \right).
\end{equation*}

\begin{definition}
  \label{def:obsequiv}
  Two product MDPs $\mathcal{M}_1\otimes \auto_1$ and $\mathcal{M}_2\otimes\auto_2$ with the same action spaces $\mathcal{A}_1=\mathcal{A}_2$ are \emph{observationally equivalent}, denoted $\mathcal{M}_1\otimes \auto_1 \cong \mathcal{M}_2\otimes\auto_2$, if for any observation sequence, any action trajectory, and all $\tau\in\N$, it holds that $
    {\Pr}^{\mathcal{P}^\otimes_1} \left[ (o_t)_{t=0}^\tau)|(a_t)_{t=1}^\tau)\right] = {\Pr}^{\mathcal{P}^\otimes_2} \left[ (o_t)_{t=0}^\tau)|(a_t)_{t=1}^\tau)\right],$
  where $\mathcal{P}_i^\otimes = (\mathcal{M}_i\otimes\auto_i, \mathcal{O}, Z)$. The observation spaces need not be identical for this to hold.
\end{definition}
Definition \ref{def:obsequiv} says that the learnt product MDP is, in general, not unique because several can have the same observational properties, i.e., no gathered experiment (episode) can distinguish between them. 
\begin{proposition}
  \label{prop:equiv}
  The observational equivalence relation $\cong$ is an equivalence relation since it is reflexive, symmetric, and transitive.
\end{proposition}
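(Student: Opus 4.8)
The plan is to verify the three defining properties of an equivalence relation directly from Definition \ref{def:obsequiv}, reducing each to the corresponding property of equality of real numbers (the occurrence probabilities) together with equality of sets (the action spaces). First I would fix notation: for a product MDP $\mathcal{M}\otimes\auto$ write $\mathcal{P}^\otimes = (\mathcal{M}\otimes\auto,\mathcal{O},Z)$ for the associated POMDP, and for each choice of observation sequence $(o_t)_{t=0}^\tau$, action trajectory $(a_t)_{t=1}^\tau$, and $\tau\in\N$, abbreviate by $p(\mathcal{P}^\otimes)$ the quantity ${\Pr}^{\mathcal{P}^\otimes}[(o_t)_{t=0}^\tau \mid (a_t)_{t=1}^\tau]$. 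With this shorthand, $\mathcal{M}_1\otimes\auto_1 \cong \mathcal{M}_2\otimes\auto_2$ says precisely that $\mathcal{A}_1 = \mathcal{A}_2$ and $p(\mathcal{P}^\otimes_1) = p(\mathcal{P}^\otimes_2)$ for every such choice.

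Then I would dispatch the three cases in turn. For reflexivity: given any product MDP with action space $\mathcal{A}$, we have $\mathcal{A} = \mathcal{A}$ and $p(\mathcal{P}^\otimes) = p(\mathcal{P}^\otimes)$ for every choice of observation sequence, action trajectory, and $\tau$, so $\mathcal{M}\otimes\auto \cong \mathcal{M}\otimes\auto$. For symmetry: assuming $\mathcal{M}_1\otimes\auto_1 \cong \mathcal{M}_2\otimes\auto_2$, we get $\mathcal{A}_1 = \mathcal{A}_2$ hence $\mathcal{A}_2 = \mathcal{A}_1$, and $p(\mathcal{P}^\otimes_1) = p(\mathcal{P}^\otimes_2)$ for each choice hence $p(\mathcal{P}^\otimes_2) = p(\mathcal{P}^\otimes_1)$, giving $\mathcal{M}_2\otimes\auto_2 \cong \mathcal{M}_1\otimes\auto_1$. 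For transitivity: assuming $\mathcal{M}_1\otimes\auto_1 \cong \mathcal{M}_2\otimes\auto_2$ and $\mathcal{M}_2\otimes\auto_2 \cong \mathcal{M}_3\otimes\auto_3$, we get $\mathcal{A}_1 = \mathcal{A}_2 = \mathcal{A}_3$, and for each fixed observation sequence, action trajectory, and $\tau$ we get $p(\mathcal{P}^\otimes_1) = p(\mathcal{P}^\otimes_2) = p(\mathcal{P}^\otimes_3)$, hence $\mathcal{M}_1\otimes\auto_1 \cong \mathcal{M}_3\otimes\auto_3$.

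The only point that warrants any care — and it is slight — is twofold: one must check that the well-definedness side condition ``same action spaces'' is itself reflexive, symmetric, and transitive (which it is, being equality of sets), so that the hypotheses of Definition \ref{def:obsequiv} are genuinely met in the symmetry and transitivity steps; and, in the transitivity argument, one must fix the observation sequence, action trajectory, and $\tau$ before chaining the two equalities, rather than discharging the universal quantifiers prematurely. I do not anticipate any real obstacle: once the definition is unwound into the ``$\mathcal{A}_1 = \mathcal{A}_2$ and $p(\mathcal{P}^\otimes_1) = p(\mathcal{P}^\otimes_2)$'' form, the argument is entirely mechanical and inherits all three properties from equality.
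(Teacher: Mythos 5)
Your proof is correct and follows exactly the route the paper intends: the paper offers no separate proof of this proposition, simply asserting in its statement that reflexivity, symmetry, and transitivity are straightforward, and your mechanical unwinding of Definition~\ref{def:obsequiv} into equality of action spaces plus equality of occurrence probabilities is precisely that straightforward argument, written out. Your added care about fixing the observation sequence, action trajectory, and $\tau$ before chaining equalities in the transitivity step is a sensible (if minor) refinement of what the paper leaves implicit.
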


\begin{proposition}
\label{prop:lang}
Let $\mathcal{M}$ be a labelled MDP. For any two TAs $\auto,\auto'$, if $\mathcal{L}(\auto) = \mathcal{L}(\auto')$, then $\mathcal{M}\otimes\auto \cong \mathcal{M}\otimes\auto'$.
\end{proposition}

Proposition \ref{prop:equiv} says we can find the quotient set of all product MDPs with the same action spaces under the equivalence relation $\cong$. The equivalence class for the product MDP $\mathcal{M}\otimes\auto$ is denoted by $[\mathcal{M}\otimes\auto]$. Proposition \ref{prop:lang} enables us to formalise Step 1 of Algorithm 1.

\begin{mdframed}[backgroundcolor=gray!20, nobreak=true, innerleftmargin=8pt, innerrightmargin=8pt]
\tbf{Step 1:} Given state trajectories and reward sequences from interacting with the initially unknown product MDP $\mathcal{M}\otimes \auto$, learn an estimate $\hat{\mathcal{M}}\otimes\hat{\auto} \in [\mathcal{M}\otimes\auto]$.
\end{mdframed}

\begin{definition}
  The digraph $G=(E,V)$ underlying an MDP $\mathcal{M}$ with state-space $\mathcal{S}$ and transition probability function $P$ has $V = \mathcal{S}$ and $E = \{s_i\rightarrow s_j : \exists a_k \in \mathcal{A} \textrm{ s.t. } P(s_j\mid s_i, a_k) >0 \}$. $\mathcal{M}$ is \emph{structurally correct} with respect to another MDP $\mathcal{M}'$ if the digraph underlying $\mathcal{M}$ is isomorphic to $\mathcal{M}'$.
\end{definition}

Any algorithm which learns a maximum likelihood estimate (MLE) for the HMM parameters is asymptotically structurally correct, provided it converges to the true MLE and belongs in $[\mathcal{M}\otimes \auto]$. So, every hidden state of the learned model must correspond to a state $s^\otimes \in \mathcal{S}^\otimes$ in the true product MDP $\mathcal{M}\otimes \auto$ \cite{yang2015statistical}.

\subsection{Step 2: Distil TA from Product MDP}
\label{sec:distill}

The TA component $q$ of each product MDP state $s^\otimes = \langle s,q\rangle$ is hidden, but even when visible, the agent can only recover the TA fragment accessible using paths in the MDP.

\begin{definition}
\label{def:restrictedDFA}
  Given an MDP $\mathcal{M}$ and a TA $\auto$, the \emph{MDP-restricted TA} $\auto_\mathcal{M}$ is the sub-graph of $\auto$ covered by all MDP-attainable traces. We refer to paths within/outside of the MDP-restricted TA sub-graph as the \emph{MDP-attainable}/\emph{MDP-non-attainable} paths of the TA.
\end{definition}

\begin{proposition}
\label{prop:MDPTA}
 Suppose an agent is given a structurally correct product MDP $\hat{\mathcal{M}}\otimes\hat{\auto}$ belonging to $[\mathcal{M} \otimes \auto]$, where $\mathcal{M}$ is the true MDP underlying the product MDP, and the agent can observe $s$ and $q$ for all $\langle s,q\rangle \in \hat{\mathcal{S}}^\otimes$. Then, the agent can learn the MDP-restricted TA $\hat{\auto}_{\hat{\mathcal{M}}}$, but may not be able to learn the full TA~$\hat{\auto}$.
\end{proposition}

From Step 1, the agent knows the digraph of $\mathcal{M}\otimes\auto$ and an estimate $\hat{P}$ for the transition probability function of the true MDP $\mathcal{M}$. Therefore, we first compute a product MC $\hat{\mathcal{M}_\pi}\otimes\hat{\auto}$ by assigning to the agent a fully mixed policy~$\pi$. We now recognise that when the edges of the digraph underlying $\hat{\mathcal{M}_\pi}\otimes\hat{\auto}$ are labelled according to each edge's target state (recall that the agent observes traces), it results in an NFA whose accepting states are those in which reward 1 is obtained (see Appendix B for a worked example).

\begin{proposition}
  \label{prop:yieldNFA}
    Given a labelled product MC $\mathcal{M}_\pi\otimes\auto$ with binary reward, labelling every edge $\langle s_i, q\rangle\rightarrow\langle s_j, q_j\rangle$ according to the label of its target state's MDP component (i.e., in this case, $L(s_j) = \ell_j$) results in an NFA. 
\end{proposition}

This NFA can be converted into a DFA with language equal to the MDP-restricted TA $\auto_\mathcal{M}$ using the subset construction method \cite{rabin1959finite}.

\begin{proposition}
\label{prop:learntNFA}
  Let $\mathcal{N}$ be the NFA underlying the learnt product MC $\mathcal{M}_\pi\otimes\auto$. Then, $\mathcal{L}(\mathcal{N}) = \mathcal{L}(\auto_\mathcal{M})$, where $\auto_\mathcal{M}$ is the MDP-restricted true TA.
\end{proposition}

However, the subset-construction algorithm has worst-case exponential time complexity in the size of the NFA (i.e.,  $\mathcal{O}(2^{|\mathcal{S}^\otimes|})$). We therefore introduce our more efficient \textit{Cone Lumping} method for determinising NFAs that underlie product MDPs, which explicitly leverages their product structure. `Lumping' aligns with related literature (e.g., \citep{derisavi2003optimal}).
Our key insight is that out-going edges from a product state $\langle s,q\rangle \in \mathcal{S}^\otimes$ of a product MC $\mathcal{M}_\pi\otimes\auto$ that have the same label \emph{must} transition to the same next TA state. 
Every time this occurs, the product states are merged. When representing state transitions sequentially from left to right, this merging criterion looks for a \emph{cone} structure to merge all states on the right-hand side (Figure \ref{fig:cone}). 

\begin{lemma}[\tbf{Cone Lumping}]
  \label{lemma:cone}
    Given a labelled product MC $\mathcal{M}_\pi\otimes\auto$, let $E_l^{\langle s, q\rangle} = \{\langle s, q\rangle\xrightarrow[]{L(s_i)}\langle s_i, q_j\rangle : L(s_i) = \ell \}$ be the set of all out-going edges from state $\langle s, q\rangle$ in the digraph underlying $\mathcal{M}_\pi\otimes\auto$ with label $\ell$. Since any TA transition $q\xrightarrow[]{l} q_j$ is deterministic, then $q_j=q_k$ for any pair $q_j,q_k \in \{q': \langle s, q\rangle\xrightarrow[]{l}\langle s_i, q'\rangle \}$. Thus, for a product state $\langle s,q\rangle$, $|\{q' : \langle s, q\rangle\xrightarrow[]{L(s_i)}\langle s_i, q'\rangle \wedge L(s_i)=l \}|=1.$
\end{lemma}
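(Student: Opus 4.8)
The plan is to read the claim off directly from the definition of the product transition function together with the determinism of the TA. First I would recall from Definition~\ref{def:prodmdp} that an edge $\langle s,q\rangle \to \langle s_i, q'\rangle$ appears in the digraph underlying $\mathcal{M}_\pi\otimes\auto$ only when the corresponding product transition probability is positive (for some action $a$ in the MDP case, or outright in the MC case), and that, by construction, a positive product probability forces $q' = \delta(q, L(s_i))$. Combined with Proposition~\ref{prop:yieldNFA}, which labels each such edge by the label $L(s_i)$ of its target's MDP component, this yields the key observation: for a fixed source $\langle s,q\rangle$ and a fixed label $\ell$, \emph{every} edge of the form $\langle s,q\rangle \xrightarrow{\ell} \langle s_i, q'\rangle$ has $q' = \delta(q,\ell)$, i.e. the target TA-component is determined by $q$ and $\ell$ alone and does not depend on which successor MDP-state $s_i$ is chosen.

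Second, I would invoke the fact that $\auto$ is a DFA: by Definition~\ref{def:DFA_NFA}, $|\delta(q,\ell)| \le 1$, so $\delta(q,\ell)$ is a single, well-defined TA state whenever it is defined. Since every target $q'$ of an $\ell$-labelled out-edge of $\langle s,q\rangle$ equals this one state, the set $\{q' : \langle s, q\rangle\xrightarrow{\ell}\langle s_i, q'\rangle \}$ has at most one element; in the notation of the lemma, whenever $E_\ell^{\langle s, q\rangle}$ is non-empty at least one such $q'$ exists and $\delta(q,\ell)$ is necessarily defined, so the cardinality is exactly $1$.

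I do not expect a substantive obstacle here — the lemma is essentially a restatement of ``$\delta$ is a (partial) function.'' The only points needing a word of care are: (i) making explicit that ``edges with the same label'' refers to the target-state labelling convention of Proposition~\ref{prop:yieldNFA}, so that two such edges genuinely feed the same argument $\ell$ into $\delta$; and (ii) the non-emptiness direction — noting that the asserted equality ``$=1$'' is implicitly restricted to labels $\ell$ that actually occur among the out-edges of $\langle s,q\rangle$, since $\delta$ may be partial and a label absent from $s$'s successors contributes the empty set rather than a singleton. I would close with a one-line remark that this is precisely the structural property that licenses ``lumping'' all $\ell$-labelled successors of $\langle s,q\rangle$ into a single determinised successor, which is what makes the Cone Lumping determinisation avoid the exponential blow-up of the generic subset construction.
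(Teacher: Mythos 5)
Your proposal is correct and follows essentially the same argument the paper itself uses: the lemma's justification is already embedded in its statement (determinism of the TA transition function, via the product construction's requirement that $q' = \delta(q, L(s_i))$, forces all same-labelled out-edges from $\langle s,q\rangle$ to share one target TA-component), and the paper offers no separate appendix proof beyond this. Your added care about the partiality of $\delta$ and the target-state labelling convention is a reasonable tightening but does not change the route.
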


\begin{figure}[h!]
  \centering
  \resizebox{0.3\linewidth}{!}{
  \begin{tikzpicture}[->,>=stealth',shorten >=1pt,auto,node distance=1cm, thick] 

	\node[state,minimum size=1.2cm] (source)   {$\langle s,q\rangle$}; 
	\node[state,minimum size=1cm] (A) [above right=of source, xshift=1cm,yshift=-0.4cm] {$\langle s_{1},q'\rangle$};
	\node (B) [right=of source, xshift=.8cm,yshift=0.25cm] {$\textbf{\vdots}$}; 
	\node (B1) [right=of source, xshift=-0.8cm,yshift=0.17cm] {$\textbf{\vdots}$}; 
	\node[state,minimum size=1cm,yshift=0.3cm] (C) [below =of A] {$\langle s_{n},q'\rangle$};

	\path[->] 
	(source) edge node[xshift=0.05cm,yshift=-0.05cm]{$\ell$}  (A)
	(source) edge node[below, xshift=-0.1cm]{$\ell$}  (C);
	
\end{tikzpicture}}
  \caption{Schematics of ``Cone Lumping''.} 
  \label{fig:cone}
\end{figure}
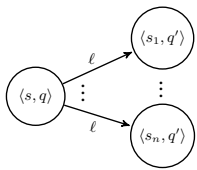

Performing Cone Lumping for all $\langle s,q\rangle\in \mathcal{S}^\otimes$ until a fixed point is reached (i.e., no more states can be merged), obtains a TA accepting (at minimum) $\mathcal{L}(\auto_\mathcal{M})$, and at most $\mathcal{L}(\auto)$ (by construction of the product MDP). This is because all non-determinism in the NFA underlying $\mathcal{M}\otimes\auto$ comes from the possible existence of multiple out-going edges with the same label from a product state. 

\begin{proposition}
\label{prop:cone}
Cone Lumping determinises any product MDP in $\mathcal{O}(|\mathcal{S}^\otimes|^3)$, exponentially faster than Rabin and Scott \cite{rabin1959finite}'s subset-construction $\mathcal{O}(2^{|\mathcal{S}^\otimes|})$ algorithm.
\end{proposition}

Cone Lumping therefore efficiently converts an NFA underlying a labelled MC into a DFA (our TA in this work, Section \ref{sec:exp}) and it assists in transfer learning because the hardness is confined to learning the product MDP. If the MDP or TA changes slightly, the agent only has to relearn the affected part of the product MDP, before using Cone Lumping to efficiently distil the new TA. 
In practice, this uses the transition probability distribution $\hat{P}^\otimes_\pi$ of the old MDP and TA's product MDP as an initial estimate for the new product MDP. The parts of the MDP and TA that remain the same will not need updating, so Baum-Welch only needs to update the parts that have changed. This speeds up convergence significantly \citep{yang2015statistical}. Finally, this process can also be useful outside of our work, e.g., for (bi)simulation or model reduction studies \citep{larsen1991bisimulation}.

\begin{mdframed}[backgroundcolor=gray!20, nobreak=true, innerleftmargin=8pt, innerrightmargin=8pt]
\tbf{Step 2:} Applying Cone Lumping to a labelled product MC $\mathcal{M}_\pi\otimes\auto$ for all $\langle s,q\rangle\in \mathcal{S}^\otimes$ until a fixed point is reached returns a TA $\auto_\mathcal{M}'$, where $\mathcal{L}(\auto_\mathcal{M}') = \mathcal{L}(\auto_\mathcal{M})$.
\end{mdframed}

\subsection{Step 3: Remove environmental bias and minimise the learnt TA}
\label{sec:envbias}

The TA distilled from the product MDP in Step 2 is in general not unique (cf. Section 3.1). If the TA is a DFA, as has been this paper's focus, it can be minimised using, for instance, Hopcroft \cite{hopcroft1971n}'s algorithm in $\mathcal{O}(n\log n)$. However, the TA might also not be minimal due to environmental bias. 
If a certain label \textit{must} be traversed by the agent to reach a necessary label for achieving the underlying task specification, automaton-learning algorithms cannot differentiate the importance of the two labels and will therefore assume that \textit{both} are necessary TA transitions, even when the former is unnecessary.

\begin{theo}\label{thm:nonuniqueTA}
    Given an arbitrary product MDP $\mathcal{M}\otimes \auto$, the TA distilled from $\mathcal{M}\otimes \auto$ may not be unique because there can exist $\auto'\neq \auto$, with or without $\mathcal{L}(\auto)\neq \mathcal{L}(\auto)$, such that $\mathcal{M}\otimes\auto \cong \mathcal{M}\otimes\auto'$. 
\end{theo}
\begin{proof}[Proof sketch by (counter)example] 
  Let $\mathcal{M}$ be the labelled MDP defined by the environment in Figure \ref{fig:casemdp} and $\auto$ and $\auto'$ be the TAs in Figures \ref{fig:TA1} and \ref{fig:TA2} respectively, where $\auto$ is the true TA. Then, $\auto'$ satisfies $\mathcal{M}\otimes\auto'\cong \mathcal{M}\otimes\auto$. In Figure \ref{fig:casestudy}, a \faBorderNone-label must be traversed immediately prior to reaching the \faBook-label, no matter which MDP path to \faBook~is taken. Because reward is only obtained upon reaching \faBook, there is no way to identify from the reward signal if the sequence (\faBorderNone \,,\, \faBook) is necessary for reward, or just reaching \faBook suffices. The agent does not know which of the two task specifications, expressed by the TAs $\auto$ and $\auto'$, is sufficient. 

\end{proof}

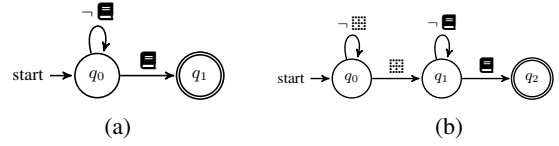
\begin{figure}[tb]
  \centering
 \begin{subfigure}[b]{.44\linewidth}
 \centering
\vspace{-2em}
 \resizebox{0.8\linewidth}{!}{
	\begin{tikzpicture}[->,>=stealth',shorten >=1pt,auto,node distance=1cm, thick] 

		\node[state,initial] (q_0)   {$q_0$}; 
		\node[state,accepting] (q_1) [right=of q_0] {$q_1$};

		\path[->] 
		(q_0) edge [loop above] node {$\neg$ \faBook} ()   	
		(q_0) edge  node {\faBook} (q_1);
		
	\end{tikzpicture}
 }
 \subcaption{}
 \label{fig:TA1}
 \end{subfigure}
 \begin{subfigure}[b]{.55\linewidth}
  \vspace{-2em}
 \resizebox{0.8\linewidth}{!}{
	\begin{tikzpicture}[->,>=stealth',shorten >=1pt,auto,node distance=1cm, thick] 

		\node[state,initial] (q_0)   {$q_0$}; 
		\node[state] (q_1) [right=of q_0] {$q_1$}; 
		\node[state,accepting] (q_2) [right=of q_1] {$q_2$}; 
		
		\path[->] 
		(q_0) edge [loop above] node {$\neg$ \faBorderNone} ()   	
		(q_0) edge  node {\faBorderNone} (q_1)
		
		(q_1) edge [loop above] node {$\neg$ \faBook} ()
		
		(q_1) edge node {\faBook} (q_2);
		
	\end{tikzpicture}}
  \subcaption{}
 \label{fig:TA2}
 \end{subfigure}
 \caption{Two TAs for Example 1's environment in Figure \ref{fig:casemdp} to show that the learned TA can be environmentally biased.}
 \label{fig:envbiasedTA}
\end{figure}

\begin{corollary}
  In general, it is impossible to verify that a TA $\auto'$ is the TA that underpins the product MDP $\mathcal{M}\otimes\auto$ (i.e., that $\auto'=\auto$), even given knowledge of $\mathcal{M}$ and $\mathcal{M}\otimes\auto$. 
\end{corollary}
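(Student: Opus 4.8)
The plan is to derive the corollary directly from Theorem \ref{thm:nonuniqueTA} by a short indistinguishability (pigeonhole) argument. First I would pin down the reading of ``knowledge of $\mathcal{M}\otimes\auto$'': as explained at the start of Section 3.1, the TA-state component of a product state is never directly observable, and Step 1 only ever recovers a representative of the equivalence class $[\mathcal{M}\otimes\auto]$ under $\cong$. So the information available to any would-be verifier is exactly (i) the labelled MDP $\mathcal{M}$ and (ii) the observational behaviour of $\mathcal{M}\otimes\auto$, i.e.\ its class under the relation of Definition \ref{def:obsequiv}. A ``verification procedure'' is then a map taking this information together with a candidate TA $\auto'$ and outputting \textsc{yes} iff $\auto'=\auto$.

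Next I would invoke Theorem \ref{thm:nonuniqueTA} to obtain a witness: a product Markov process $\mathcal{M}\otimes\auto$ and a TA $\auto'\neq\auto$ with $\mathcal{M}\otimes\auto\cong\mathcal{M}\otimes\auto'$. Since $\cong$ is an equivalence relation (Proposition \ref{prop:equiv}), $[\mathcal{M}\otimes\auto]=[\mathcal{M}\otimes\auto']$, so the verifier is handed literally identical inputs in the two scenarios ``the true automaton is $\auto$, candidate $\auto'$'' and ``the true automaton is $\auto'$, candidate $\auto'$'', and must return the same answer. Correctness, however, requires \textsc{no} in the first scenario (as $\auto'\neq\auto$) and \textsc{yes} in the second; contradiction. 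Hence no such procedure can exist, which is precisely the assertion of the corollary; the same argument also shows that if one insists only on soundness (never wrongly outputting \textsc{yes}), the procedure is forced to answer \textsc{no} even on the genuine $\auto$, rendering it vacuous.

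I expect the only real obstacle to be the first step: making rigorous that ``given knowledge of $\mathcal{M}$ and $\mathcal{M}\otimes\auto$'' cannot legitimately mean literal access to the $\mathcal{Q}$-labelling of the product states — which would make the problem trivial — and tying this cleanly to what Section 3.1 and Definition \ref{def:obsequiv} identify as recoverable from experience. Once that modelling point is agreed, the remainder is a one-line argument on the two possible outputs of the hypothetical verifier, using the witness supplied by Theorem \ref{thm:nonuniqueTA} and the fact that observationally equivalent product MDPs over the same $\mathcal{M}$ are, by construction, inputs the verifier cannot tell apart.
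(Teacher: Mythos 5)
Your proposal is correct and takes essentially the same route as the paper: the paper gives no separate proof of this corollary, treating it as an immediate consequence of Theorem \ref{thm:nonuniqueTA}, and your indistinguishability argument --- a verifier handed the identical class $[\mathcal{M}\otimes\auto]$ in two scenarios that demand different answers cannot exist --- is exactly the intended derivation. Your care over the reading of ``knowledge of $\mathcal{M}\otimes\auto$'' (the observational class rather than a $\mathcal{Q}$-labelled structure) is also the right one, consistent with Section 3.1 and with the fact that the theorem's witness pair differ only in the hidden TA-state components.
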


Thus, to \emph{minimise} environmental bias, we must use counterfactual reasoning in our \emph{post-processing} step to select among the set of TAs $\{\auto': \mathcal{M}\otimes\auto'\cong \mathcal{M}\otimes\auto\}$ the TA that requires the smallest alphabet to explain its behaviour. 
\begin{mdframed}[backgroundcolor=gray!20, nobreak=true, innerleftmargin=8pt, innerrightmargin=8pt]
\tbf{Step 3:} For every label in $L(\mathcal{S})$, guess that it has no effect in the supposed true TA $\auto$ (i.e., the agent self-loops on every TA state). To check this guess, first merge any states of the learned-automaton $\auto'$ that have a non-loop edge between them with this label, then check if the resulting TA outputs the correct acceptance/rejection on every word in the set of observed sequences.
\end{mdframed}

This post-processing only removes labels that are never meaningful to $\auto$; it does not remove labels that \textit{are} necessary somewhere, but may still exist `wrongfully' in $\auto'$ due to environmental bias. Further post-processing can address this by looking for neighbourhoods of (possibly same-labelled) states that entirely surround another label, then checking, as above, whether these are necessary in $\auto$.

\section{Experimental Results}
\label{sec:exp}

Although the main contributions of this paper are theoretical, our experiments (available at \url{https://github.com/dkhyland/TALearner}) evaluate the efficacy of Algorithm 1's pipeline. To our knowledge, we are the first to focus on the task of learning both the TA and spatial MDP starting from no a-priori knowledge. Our experiments assess our pipeline's ability to scale up to 5x5 MDP grid-worlds and 5-state TAs and benchmark, where possible, to related work. These sizes are comparable to or beyond those handled in all related work (see Section \ref{sec:related}). 

Figure \ref{fig:exp_gridworld} illustrates the three grid worlds and the three-, four-, and five-state TAs used for our experiments. The grid worlds all share a common set of high-level environmental features $\mathcal{AP}\equiv \{\text{\faCoffee,\faBorderNone,\faCouch,\faTv,}\textstairs\}$, which have been distributed randomly across the grid. The agent can visit every action and state within the grid world.

\begin{figure}[h]
    \centering
    \begin{subfigure}[b]{.33\linewidth}
    \centering
    \resizebox{0.7\linewidth}{!}{\begin{tikzpicture}[scale=0.54]
    \draw[step=1cm,gray] (0,0) grid (3, 3);
    
    \node at (2.5,0.5) {\faCoffee};

    
    \node at (2.5,2.5) {\faBorderNone};

    
    \node at (2.5,1.5) {\faTv};
    \node at (0.5,1.5) {\faCouch};

    \node at (-0.7,.6) {\agent};
    
    \node at (1.5,2.5) {$\stairwayupfilled$};
    
    \draw[ultra thick] (0,0) rectangle (3, 3);
    
    \path[ultra thick,->, >=stealth] (-.28,.5) edge node [left] {} (0,.5);
    
\end{tikzpicture}}
    \subcaption{3x3 gridworld}
    \end{subfigure}%
    \begin{subfigure}[b]{.33\linewidth}
    \centering
    \resizebox{0.85\linewidth}{!}{\begin{tikzpicture}[scale=0.54]
    \draw[step=1cm,gray] (0,0) grid (4, 4);
    
    \node at (3.5,0.5) {\faCoffee};

    
    \node at (3.5,3.5) {\faBorderNone};

    
    \node at (3.5,1.5) {\faTv};
    \node at (0.5,1.5) {\faCouch};

    \node at (-0.7,.6) {\agent};
    
    \node at (1.5,3.5) {$\stairwayupfilled$};
    
    \draw[ultra thick] (0,0) rectangle (4, 4);
    
    \path[ultra thick,->, >=stealth] (-.28,.5) edge node [left] {} (0,.5);
    
\end{tikzpicture}}
    \subcaption{4x4 gridworld}
    \end{subfigure}%
    \begin{subfigure}[b]{.33\linewidth}
    \centering
    \resizebox{1\linewidth}{!}{\begin{tikzpicture}[scale=0.54]
    \draw[step=1cm,gray] (0,0) grid (5, 5);
    
    \node at (4.5,0.5) {\faCoffee};

    
    \node at (3.5,4.5) {\faBorderNone};
    \node at (4.5,4.5) {\faBorderNone};
    \node at (3.5,3.5) {\faBorderNone};
    \node at (4.5,3.5) {\faBorderNone};

    
    \node at (4.5,2.5) {\faTv};
    \node at (1.5,4.5) {\faCouch};
    \node at (0.5,2.5) {\faCouch};

    \node at (-0.7,.6) {\agent};
    
    \node at (2.5,4.5) {$\stairwayupfilled$};
    
    \draw[ultra thick] (0,0) rectangle (5, 5);
    
    \path[ultra thick,->, >=stealth] (-.28,.5) edge node [left] {} (0,.5);
    
\end{tikzpicture}}
    \subcaption{5x5 gridworld}
    \end{subfigure}%
    \newline
     \begin{subfigure}[b]{.99\linewidth}
    \centering
    
  \resizebox{0.43\linewidth}{!}{
\begin{tikzpicture}[->,>=stealth',shorten >=1pt,auto,node distance=1cm, thick] 

	\node[state,initial] (q_0)   {$q_0$}; 
	\node[state] (q_1) [right=of q_0] {$q_1$}; 
	\node[state,accepting] (q_2) [right=of q_1] {$q_2$}; 
	
	\path[->] 
	(q_0) edge [loop above] node {$\neg$ \faCoffee} ()   	
	(q_0) edge  node {\faCoffee} (q_1)
	
	(q_1) edge [loop above] node {$\neg \stairwayupfilled$} ()
	
	(q_1) edge node {$\stairwayupfilled$} (q_2);
	
\end{tikzpicture}}
    \resizebox{.56\linewidth}{!}{
\begin{tikzpicture}[->,>=stealth',shorten >=1pt,auto,node distance=1cm, thick] 

	\node[state,initial] (q_0)   {$q_0$}; 
	\node[state] (q_1) [right=of q_0] {$q_1$}; 
	\node[state] (q_2) [right=of q_1] {$q_2$}; 
	\node[state,accepting] (q_3) [right=of q_2] {$q_3$}; 
	
	\path[->] 
	(q_0) edge [loop above] node {$\neg$ \faCoffee} ()   	
	(q_0) edge  node {\faCoffee} (q_1)
	
	(q_1) edge [loop above] node {$\neg$ \faCouch} ()   	
	(q_1) edge  node {\faCouch} (q_2)
	
	(q_2) edge [loop above] node {$\neg \stairwayupfilled$} ()
	(q_2) edge node {$\stairwayupfilled$} (q_3);
	
\end{tikzpicture}}
    
    \vspace{0.5em}
    \resizebox{.5\linewidth}{!}{
    \begin{tikzpicture}[->,>=stealth',shorten >=1pt,auto,node distance=1cm, thick] 

        \node[state,initial] (q_0)   {$q_0$}; 
        \node[state] (q_1) [right=of q_0] {$q_1$}; 
        \node[state] (q_2) [below right = 0.05cm and 0.7cm of q_1] {$q_2$}; 
        \node[state] (q_3) [below left =  0.05cm and 0.7cm of q_2] {$q_3$}; 
        \node[state,accepting] (q_4) [left =of q_3] {$q_4$}; 
        
        \path[->] 
        (q_0) edge [loop above] node {$\neg$ \faCoffee} ()   	
        (q_0) edge  node {\faCoffee} (q_1)
        
        (q_1) edge [loop above] node {$\neg$ \faCouch} ()   	
        (q_1) edge  node {\faCouch} (q_2)
        
        (q_2) edge [loop right] node {$\neg$ \faTv} ()   	
        (q_2) edge  node {\faTv} (q_3)
        
        (q_3) edge [loop below] node {$\neg \stairwayupfilled$} ()
        (q_3) edge node {$\stairwayupfilled$} (q_4);
        
    \end{tikzpicture}}
    \subcaption{3, 4, and 5 state TAs}
    \end{subfigure}%
    \caption{(a), (b), and (c) show the grid worlds used in all experiments. The agent starts in the bottom left-hand corner of the grid and follows a random exploration policy over all episodes. (d) gives the TAs used for their respective task specifications: get coffee, then go upstairs; get coffee, serve to guest on a couch, then go upstairs; and get coffee, serve to guest on a couch, turn the TV on, then go upstairs.}
    \label{fig:exp_gridworld}
\end{figure}
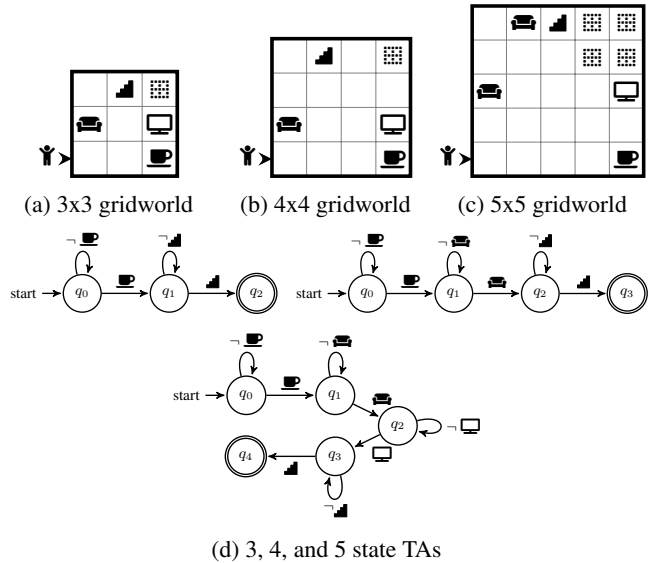

For all experiments, the convergence criterion was taken to be the point when the difference between the transition matrix estimates over one pass through the training data achieves a maximum absolute row sum less than $10^{-6}$. In practice, this is sufficient to ensure that a structurally correct product MDP is learned. 

The episode length and number of episodes used in the training data are hyperparameters that can affect both the convergence time and the structural correctness of the learnt transition matrix. In practice, choosing the episode length so that the agent achieves (on average) the goal in between $20-50\%$ of the episodes, ensures that the agent sufficiently explores all states in the product MDP. Additionally, a general trend was observed that for larger product MDPs, more training episodes were required to obtain a structurally correct estimate of the transition matrix. The number of episodes that were sufficient for convergence in our experiments was less in our model-based approach than in model-free approaches in related work (Section \ref{sec:related}). Exact hyperparameters used and full results are presented in Table \ref{tab:exp1_params} in Appendix E.

Step 1 of our algorithm (learning the product MDP) is implemented in C++ (see Appendices D and E for more details), and the remaining steps are implemented in Python. 
All experiments are the average of 3 runs (with negligible standard errors) and were conducted on an Intel Xeon Gold 6248 (2.50GHz) CPU. Cone Lumping (Step 2) determinises the NFA underlying the product MDP for the 5x5 grid and 5-state TA in Figure \ref{fig:casestudy} in 0.43s, and removing environmental bias (Step 3) takes 0.8ms. This also confirms the usefulness of these steps for other purposes (argued in Sections \ref{sec:distill} and~\ref{sec:envbias}).

We, therefore, turn to Step 1, Algorithm~1's bottleneck. Although, note that the question of whether there is an automaton with $n$ states that agrees with a finite set of data is NP-complete \citep{gold1978complexity}. Recall from Section \ref{sec:prodMDP} that we learn our product MDP estimate $\hat{\mathcal{M}}\otimes\hat{\auto}$ using the Baum-Welch algorithm in two stages. First, we learn an estimate of the spatial MDP $\mathcal{M}$'s transition probability distribution $\hat{P}$. Then, this is used as an inductive bias for learning the transition probability distribution $\hat{P}^\otimes$ of the full product MDP. Figure \ref{fig:experiment_1} demonstrates the merits of this \emph{two-stage} approach using two baselines: i) a naive \emph{uniform} approach, which learns a product MDP estimate using a uniform initialisation (each row in the transition matrix estimate was initialised to a uniform distribution), ii) an \emph{idealised} approach which assumes that the agent has full a priori knowledge of the true spatial MDP $\mathcal{M}$; this is encoded in the initial estimate of the product MDP's transition function. Run time was chosen to measure performance instead of the number of iterations because the latter depends on the number of training steps per episode and the number of episodes generated, which varies across experiments.

\begin{figure}[th]
    \centering
   \begin{subfigure}[b]{.47\linewidth}
   \centering
   \resizebox{1\linewidth}{!}{
   \begin{tikzpicture}
\begin{axis} [ybar = .05cm,
    bar width = 12pt,
    ymode=log,
    ylabel style={at={(axis description cs:0.08,.5)},anchor=south},
    xlabel=Grid size,
    ylabel=Time to convergence (s),
    xmin=0, xmax=40,
    ymin=1, ymax=60000,
    ytick pos=left,
    xtick pos=bottom,
    xtick={10,20,30},
    xticklabels={3x3,4x4,5x5},
    legend style={at={(0.25,0.73)},anchor=south,legend cell align=left}
]

\addplot[color=blue,fill=blue,opacity=0.6] coordinates {(10,106) (20,1840) (30,6279)};
\addlegendentry{Uniform}
\addplot[color=d_red,fill=d_red,opacity=0.6] coordinates {(10,26) (20,660) (30,2626)};
\addlegendentry{Two-stage}
\addplot[color=d_green,fill=d_green,opacity=0.6] coordinates {(10,39) (20,619)( 30,2351)};
\addlegendentry{Idealised}

\end{axis}
\end{tikzpicture}
   }
   \subcaption{}
   \label{fig:experiment_1}
   \end{subfigure}
\begin{subfigure}[b]{.52\linewidth}
   \centering
   \resizebox{1\linewidth}{!}{
   \usetikzlibrary{patterns}

\begin{tikzpicture}

\begin{axis} [ybar = .05cm,
    bar width = 9pt,
    ymode=log,
    ylabel style={at={(axis description cs:0.08,.5)},anchor=south},
    xlabel=Number of TA states,
    ylabel=Time to convergence (s),
    xmin=0, xmax=40,
    ymin=1, ymax=350000,
    ytick pos=left,
    xtick pos=bottom,
    xtick={10,20,30},
    xticklabels={3,4,5},
    legend style={at={(0.25,0.65)},anchor=south,legend cell align=left}
]

\addplot[color=blue,fill=blue,opacity=0.6] coordinates {(10,26) (20,382) (30,5233)};
\addlegendentry{3x3 (Ours)}
\addplot[color=orange,fill=orange,opacity=0.6] coordinates {(10,64) (20,18188)};
\addlegendentry{3x3 (BF)}

\addplot[color=d_red,fill=d_red,opacity=0.6] coordinates {(10,661) (20,3825) (30,38025)};
\addlegendentry{4x4}
\addplot[color=d_green,fill=d_green,opacity=0.6] coordinates {(10,2625) (20,16820) (30,68978)};
\addlegendentry{5x5}
\addplot[color=orange, fill=orange, opacity=0.6, postaction={pattern=north east lines}] coordinates {(23.6,150000)};

\node[] at (axis cs: 28,230000) {Timeout};

\end{axis}
 
\end{tikzpicture}
   }
   \subcaption{}
   \label{fig:experiment_2}
   \end{subfigure}   
   \caption{Step 1's convergence times (a) comparing our pipeline (red) with two baseline approaches, a uniform prior (blue) and an idealised prior in which the spatial MDP is known (green) (b) for varying MDP and TA sizes.}
   \label{fig:new}
 \end{figure}
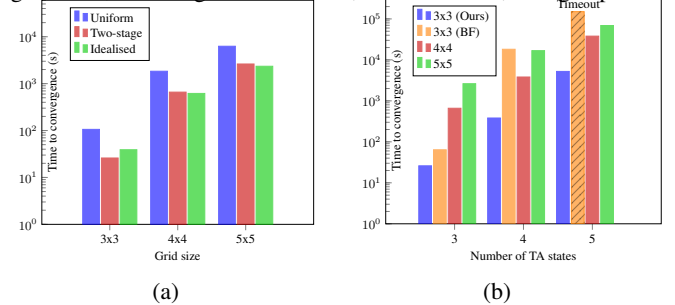

Figure \ref{fig:experiment_1} demonstrates our approach's superiority over the uniform baseline. We plot the convergence times of Step 1 for our approach (in red) against the two baselines for the 3 state TA and 3 different grid sizes. As the size of the grid increases, the time taken to learn an accurate estimate of the product MDP trends towards the idealised baseline. This is because the time taken to learn $\hat{P}$ in the first stage becomes negligible compared with learning the TA. Thus, learning the TA via a larger (and therefore ostensibly more complex) structure first, the product MDP, does not noticeably increase the time complexity of the process. This also demonstrates why our process is useful for transfer learning (Section \ref{sec:distill}). 

Figure \ref{fig:experiment_2} shows the dependence of the convergence time on the sizes of the MDP environment and TA. Increasing the TA size increases the number of sub-tasks the agent must complete before receiving reward. This Figure also shows how our algorithm's efficiency compares with related work. We presented the codebases of five related works (see Appendix F for details) the same sequences used in our smallest experiment (3x3 grid with a 3-state TA), but none of these could successfully learn the TA (or threw a recursion error). We were, however, able to perform some benchmarks against an implementation of Biermann and Feldman \cite{biermann1972synthesis}'s SAT-based algorithm in the libalf library to infer a hypothesis reward machine consistent with the data for the 3- and 4-state TAs. We used the same set of sequences that were used in our algorithm for comparison, which were labelled as positive or negative sequences depending on whether the agent received a reward. With a 4-state TA, ours had a lower convergence time in a 4x4 gridworld than BF managed in even a 3x3 gridworld, and BF timed out at $150,000$s when trying to learn the 5-state TA in the 3x3 gridworld.

\section{Related Research}
\label{sec:related}

SAT-based approaches \citep{deepsynth,xu2020joint,verginis2022joint,abadi2020learning,corazza2022reinforcement} generate a hypothesis automaton (equivalent to a temporal specification) and then verify whether this agrees with the agent's observations during exploration. We showed that these are less efficient than our approach by benchmarking against the only available codebase that learnt an automaton from our traces. Meanwhile, Icarte et al \cite{lrm-toro2019} use \textit{Tabu search}, which relies on already knowing a partial model, and Furelos-Blanco et al \cite{furelos2020induction} use an \textit{Answer Set Programming} algorithm, which assumes a known upper bound for the maximum finite distance between TA states. Our assumptions are weaker than these as we are only required to guess an upper bound on the number of TA states (in Section \ref{sec:exp}, we explained why this is easy to do) and we do not require any a priori knowledge about the spatial MDP. Also, all of these approaches are local, whereas ours is complete in the sense that our TA captures all sequential behaviour seen in the traces (high-level features).

\cite{vazquez2018learning,saqur2022reward,lauffer2022learning} all consider probabilistic approaches to learning specifications. Their maximum entropy approaches (inspired by Ziebart et al \cite{ziebart2008maximum}'s variant of inverse reinforcement learning) start from an initial pool of candidate specifications. In contrast, our maximum-likelihood approach does not a priori require any structure of the specification or the spatial MDP environment. 
Meanwhile, \cite{gaonNMR, rens2020online, dohmen2022inferring, topper2022active, xu2021active} use Angluin \cite{angluin1987learning}'s $L^*$ algorithm to learn a TA, relying on an \textit{oracle} for equivalence and membership queries. We assume that the agent cannot access an oracle and must learn the TA fully autonomously, which aligns with the standard setup of model-free RL (note that $L^*$ was not originally developed for RL applications). Because this means our training data cannot be provided as input to $L^*$, we cannot benchmark fairly against these approaches.

Learning an automaton from data is NP-hard \citep{gold1978complexity}, so under standard complexity assumptions, the time taken to solve this problem will necessarily grow exponentially as a function of the size of the TA.
The most appropriate method will depend on the use-case as different approaches choose to relax different assumptions. For example, Corazza et al \cite{corazza2022reinforcement} and Dohmen et al \cite{dohmen2022inferring} extend the SAT-based approach to noisy rewards and the $L^*$ approach to learning probabilistic automata, respectively.

Our method is most appropriate in the following situations. First, when one does not assume the presence of an oracle (required for $L^*$). Second, if the reward environment is sparse, i.e, where the agent must follow a sequence of non-rewarding steps before getting a reward if and only it completes the full task. Third, when sample-efficiency is critical (as in many real-world scenarios) since our model-based approach requires fewer episodes to learn the TA. Fourth, for transfer learning: our approach can readily handle the underlying MDP or TA changing slightly. For example, suppose the initial task specification is to get coffee for the guest on the couch. Then, the furniture is rearranged and the task changes from collecting coffee to collecting tea. Only the affected part of the product MDP needs to be re-learnt before performing the subsequent computationally cheap steps (2 and 3) in our algorithmic pipeline. Other methods would need to start learning the new TA from scratch each time.
Finally, the stochastic nature of our approach has advantages over the discrete approaches of related work; e.g, we can easily incorporate active-learning to more intelligently guide the agent's exploration \citep{anderson2005active}. Furthermore, our stochastic approach allows for entropy-based uncertainty quantification and it will always generate an estimate of the solution even with few exploration episodes. 

\section{Conclusions and Future work}
\label{sec:conclusions}
We have introduced an unsupervised pipeline for an RL agent to learn task specifications, encoded as deterministic, finite automata known as ‘task automata’, in unknown environments with sparse and non-Markovian rewards. The agent first learns a product MDP -- a composition of the task automaton with the environment's MDP before distilling out a task automaton using our efficient `Cone Lumping' method. Finally, our post-processing routine, which is useful for any method that learns automata in RL, minimises environmental bias and simplifies the learnt task automaton.

As future work, we are extending to environments with multiple tasks (i.e., cases where an agent can achieve reward by satisfying multiple different task specifications); introducing active learning to further improve the efficiency of the rate-limiting Step 1; and trying different techniques for learning HMMs/POMDPs (e.g., spectral learning \citep{hsu2012spectral}). We are also interested in multi-agent settings, where agents get rewarded according to different tasks or have to optimally share information to learn a common task. Finally, we intend to extend to cases where the task automata encode $\omega$-regular LTL properties, which require evaluation over infinite trajectories.

\ack Fox was supported by the EPSRC Centre for Doctoral Training in Autonomous Intelligent Machines and Systems (Reference: EP/S024050/1) and Wooldridge was supported
by a UKRI Turing AI World Leading Researcher Fellowship (Reference: EP/W002949/1). 

\bibliography{refs}

\appendix
\onecolumn
\pagebreak

\begin{center}
\textbf{\huge Learning Task Automata for Reinforcement Learning Using Hidden Markov Models (Supplementary Material)}
\end{center}

\appendix
\section{Proofs of Technical Results}
\setcounter{proposition}{1}
\setcounter{lemma}{0}
\setcounter{theorem}{0}
\setcounter{corollary}{0}

\begin{proposition}
Let $\mathcal{M}$ be a labelled MDP. For any two TAs $\auto,\auto'$, if $\mathcal{L}(\auto) = \mathcal{L}(\auto')$, then $\mathcal{M}\otimes\auto \cong \mathcal{M}\otimes\auto'$.
\end{proposition}
\begin{proof}
  Consider the two product Markov processes $\mathcal{M}\otimes\auto$ and $\mathcal{M}\otimes\auto'$ constructed by composing the MDP $\mathcal{M}$ with each TA $\auto$ and $\auto'$. At each time step $t$, the agent's observation $o_t$ is a pair in $\mathcal{S} \times \{0,1\}$. Therefore, observation sequences restricted to the first component of the observation pair are state trajectories, and restricted to the second component are reward sequences $\left(\chi_\mathcal{F}(q_{t})\right)^\tau_{t=0} \in \{0,1\}^*$. The probabilities of state trajectories must be equal for $\mathcal{M}\otimes\auto$ and $\mathcal{M}\otimes\auto'$ since they are constructed from identical environments $\mathcal{M}$, and the probabilities of reward sequences must be equal because the languages of $\auto$ and $\auto$ are identical by assumption.
\end{proof}

\begin{proposition}
  Suppose an agent is given a structurally correct product MDP $\hat{\mathcal{M}}\otimes\hat{\auto}$ belonging to $[\mathcal{M} \otimes \auto]$, where $\mathcal{M}$ is the true MDP underlying the product MDP, and the agent can observe $s$ and $q$ for all $\langle s,q\rangle \in \hat{\mathcal{S}}^\otimes$. Then, the agent can learn the MDP-restricted TA $\hat{\auto}_{\hat{\mathcal{M}}}$, but may not be able to learn the full TA~$\hat{\auto}$.
\end{proposition}
\begin{proof}
  \sloppy Start with the product MDP $\hat{\mathcal{M}}\otimes\hat{\auto}$, where the agent can observe the MDP and TA state components ($s$ and $q$) of each product MDP state $s^\otimes = \langle s,q\rangle$. Induce a product MC $\hat{\mathcal{M}_\pi}\otimes\hat{\auto}$ by employing a random exploration policy $\pi$. Note that any exploration policy can be used here, provided that every action is chosen at every state with non-zero probability.
  For each transition $P^\otimes(\langle s_j,q_j\rangle\mid \langle s_i, q_i\rangle) > 0$ in $\hat{\mathcal{M}_\pi}\otimes\hat{\auto}$, add the edge $q_i\stackrel{L(s_j)}{\longrightarrow}q_j$ to the candidate TA (unless it already exists). After going through all such transitions in the product MDP, all TA transitions accessible by an MDP-attainable trace must have been exhausted. We have therefore created the MDP-restricted TA $\hat{\auto}_{\hat{\mathcal{M}}}$. $\hat{\auto}_{\hat{\mathcal{M}}}$ may not be equal to the true underlying TA $\auto$ because there might exist a path in $\auto$ that corresponds with a trace that cannot be generated by an agent exploring in $\mathcal{M}$.
\end{proof}

\begin{proposition}
    Given a labelled product MC $\mathcal{M}_\pi\otimes\auto$ with binary reward, labelling every edge $\langle s_i, q\rangle\rightarrow\langle s_j, q_j\rangle$ according to the label of its target state's MDP component (i.e., in this case, $L(s_j) = \ell_j$), results in an NFA. 
\end{proposition}
\begin{proof}
  An NFA is a tuple $(\mathcal{Q}, q_0, \Sigma, \delta, \mathcal{F})$ (Definition 2). The set of states $\mathcal{Q}$ is the set of states of the product MDP. The state $q_0$ is the same as the initial state of the product MDP (since the initial states of the MDP $\mathcal{M}$ and TA $\auto$ are known). $\Sigma = 2^\mathcal{AP}$ since these are the possible labels of the edges. $\delta$ is indeed a non-deterministic transition function since from any state $s\in \mathcal{S}$, there may be multiple $s_i\in \mathcal{S}$ such that a transition from $s$ to $s_i$ is possible, and $L(s_i)=\ell$. That is, multiple possible next states may exist, which all have the same label. Finally, the accepting states $\mathcal{F}$ are the accepting states of the product MC $\mathcal{M}_\pi\otimes\auto$ -- i.e., the states where the reward observed is non-zero.
\end{proof}

\begin{proposition}
  Let $\mathcal{N}$ be the NFA underlying the learnt product MC $\mathcal{M}_\pi\otimes\auto$. Then, $\mathcal{L}(\mathcal{N}) = \mathcal{L}(\auto_\mathcal{M})$, where $\auto_\mathcal{M}$ is the MDP-restricted true TA.
\end{proposition}
\begin{proof}
  $\mathcal{N}$ accepts precisely those words that terminate at an accept state of the product MC $\mathcal{M}_\pi\otimes\auto$, and because a product state is an accept state if and only if its TA-component is accepting, the language recognised by $\mathcal{N}$ must be contained in the language of the true underlying TA. Moreover, because the set of paths in $\mathcal{N}$ underlying $\mathcal{M}_\pi\otimes\auto$ is precisely the set of MDP-attainable paths of the true TA, the language of $\mathcal{N}$ is precisely the language of the MDP-restricted TA $\auto_\mathcal{M}$.
\end{proof}

\begin{proposition}
Cone Lumping determinises any product MDP in $\mathcal{O}(|\mathcal{S}^\otimes|^3)$, exponentially faster than Rabin and Scott \cite{rabin1959finite}'s subset-construction $\mathcal{O}(2^{|\mathcal{S}^\otimes|})$ algorithm.
\end{proposition}
\begin{proof}
  Our implementation of Cone Lumping (Lemma 1) uses iterations that involve looping over all states in $\mathcal{S}^\otimes$ and all out-going transitions at each state and then merging any successor-states\footnote{A state $\langle s',q'\rangle$ such that $\langle s,q\rangle\rightarrow \langle s',q'\rangle$ is called a \textit{successor-state} of $\langle s,q\rangle$.} with the same label from $2^\mathcal{AP}$ into one equivalence class that is used as the new state. The iterations end when no states can be `lumped' into the same equivalence class (corresponding to the same TA state). In the worst case, only one state is subtracted from the NFA upon each iteration, implying that there are at most $|\mathcal{S}^\otimes|-1$ iterations. This also implies that, in the worst case, the number of successor-states that need to be checked on iteration $i$ is $(|\mathcal{S}^\otimes|-i)^2$: there are $|\mathcal{S}^\otimes|-i$ states left in the model, and from any of these there are at most $|\mathcal{S}^\otimes|-i$ out-going edges (in the worst case, the underlying graph is fully-connected). Thus, the worst-case number of check-operations before this algorithm terminates is $\sum_{i=2}^{|\mathcal{S}^\otimes|}i^2 = \frac{1}{6} |\mathcal{S}^\otimes|(|\mathcal{S}|+1)(2|\mathcal{S}|+2) - 1 = O(|\mathcal{S}^\otimes|^3)$. In the common case where there are precisely $k$ permitted out-going transitions from any state in the model (e.g., gridworld environments), this complexity reduces to $\sum_{i=2}^{|\mathcal{S}^\otimes|} i\cdot k = \frac{k}{2}|\mathcal{S}^\otimes|(|\mathcal{S}^\otimes|+1) - k=O(|\mathcal{S}^\otimes|^2)$.
\end{proof}

\section{Cone Lumping Example}

A product MDP is of size $\vert \mathcal{S} \times \mathcal{Q} \vert$, so we use a simple $3\times3$ grid-world with a three state TA (Figure \ref{fig:test1mdp}) to use as an easy demonstration of cone lumping. The task specification here again can still not be expressed by a Markovian reward function because the agent cannot go from either \faCoffee~to the stairs $\textstairs$ in one step. A product MDP in this environment can be learnt using Step 1 of our pipeline. We then induce a product MC by assigning to the agent a fully mixed policy $\pi$. Figure \ref{fig:test1prodmdp} then shows the NFA underlying this product MC, where the edges have been labelled as instructed in Proposition 5 and $\langle s_{21}, q_2\rangle$ is the only accepting state.

For the Cone Lumping step, any two outgoing edges from a product state of this NFA underlying a labelled product MC that have the same label must transition to the same next TA state (cf. Section 3.2). Observe that this is the case in Figure \ref{fig:test1prodmdp}; for example, every edge labelled with $\emptyset$ outgoing from $\langle s_{00},q_0\rangle \in \mathcal{S}^\otimes$ ends up in a TA state with $q_0$ as its TA component. Cone Lumping efficiently converts the NFA in Figure \ref{fig:test1prodmdp} into the DFA in Figure \ref{fig:test1mdp}.

In the main text, we also mentioned that Cone Lumping helps transfer learning because the hardness has been confined to learning the product MDP. If the spatial MDP or the TA changes slightly, the agent only has to relearn the affected part of the product MDP, then can use Cone Lumping to efficiently find the new TA. For instance, if after learning the TA from exploring the spatial MDP in Figure \ref{fig:test1mdp}, the agent was then put in a new $3\times3$ grid-world with no carpets present. The agent only has to re-learn the parts of the product MDP associated with MDP states $s_{20}$ and $s_{22}$ before applying our efficient Cone Lumping method again. It would then find that the old TA still applies.

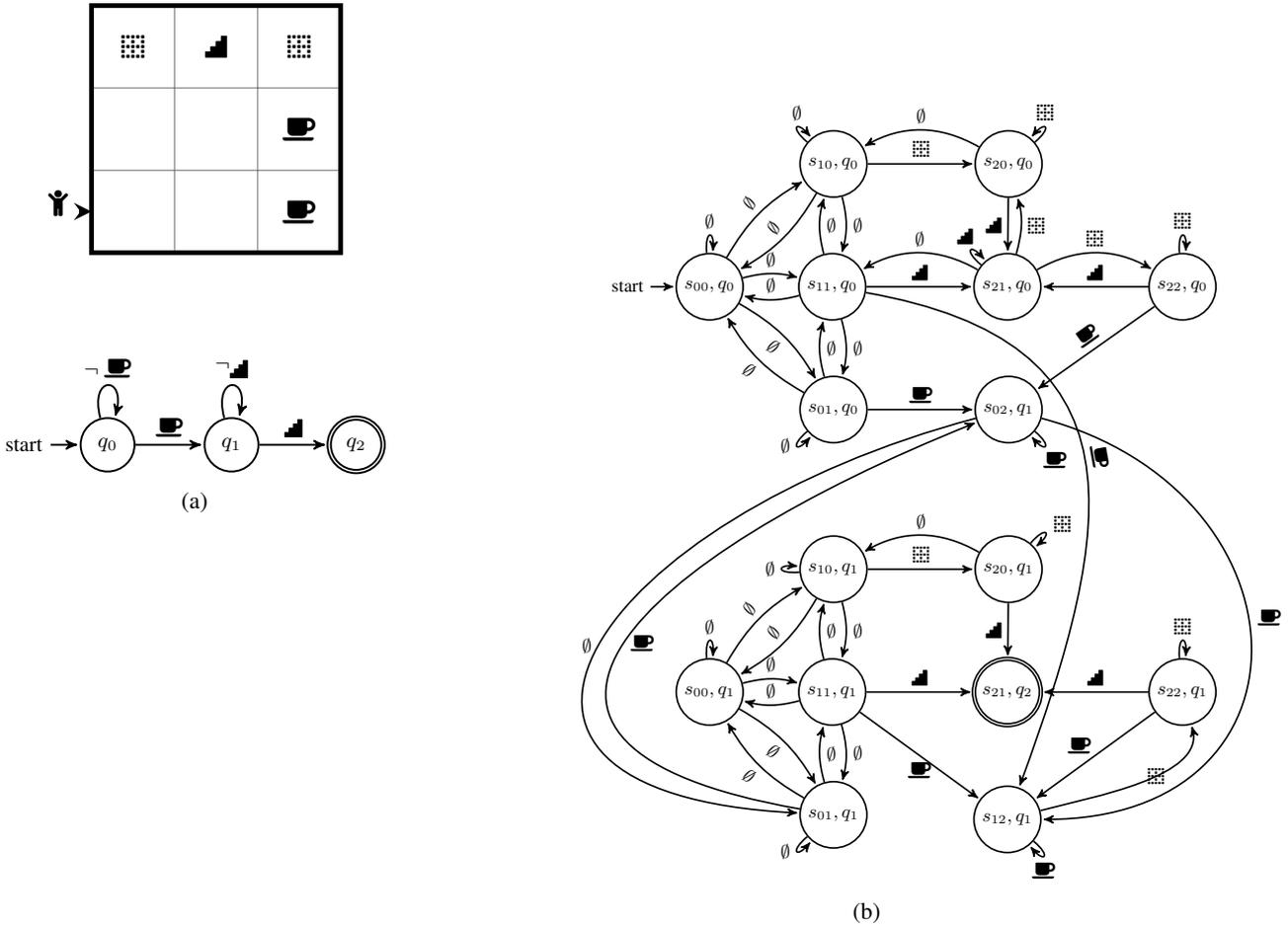
\begin{figure}[h]
  \centering
 \begin{subfigure}[b]{.3\linewidth}
 \centering
 
 \resizebox{0.8\linewidth}{!}{
 \begin{tikzpicture}[scale=1.0]
    \draw[step=1cm,gray] (0,0) grid (3, 3);
    
    \node at (2.5,0.5) {\faCoffee};
    \node at (2.5,1.5) {\faCoffee};
    
    
    \node at (0.5,2.5) {\faBorderNone};
    \node at (2.5,2.5) {\faBorderNone};
    
    \node at (-0.4,.6) {\agent};    
    \node at (1.5,2.5) {$\stairwayupfilled$};
    
    \draw[ultra thick] (0,0) rectangle (3, 3);
    
    \path[ultra thick,->, >=stealth] (-.15,.5) edge node [left] {} (0,.5);
\end{tikzpicture}
 }
 
 \vspace{4em}
  \resizebox{1\linewidth}{!}{
 \begin{tikzpicture}[->,>=stealth',shorten >=1pt,auto,node distance=1cm, thick] 

	\node[state,initial] (q_0)   {$q_0$}; 
	\node[state] (q_1) [right=of q_0] {$q_1$}; 
	\node[state,accepting] (q_2) [right=of q_1] {$q_2$}; 
	
	\path[->] 
	(q_0) edge [loop above] node {$\neg$ \faCoffee} ()   	
	(q_0) edge  node {\faCoffee} (q_1)
	
	(q_1) edge [loop above] node {$\neg \stairwayupfilled$} ()
	
	(q_1) edge node {$\stairwayupfilled$} (q_2);
\end{tikzpicture}
 }
 \subcaption{}
 \label{fig:test1mdp}
 \end{subfigure}
 \begin{subfigure}[h]{0.69\linewidth}
  \centering
  
  \resizebox{1\linewidth}{!}{
  \begin{tikzpicture}[->,>=stealth',shorten >=1pt,auto,node distance=3cm, thick]
	\tikzstyle{every state}=[fill=white,draw=black,text=black]
	
	\node[initial,state] (s00q0) {$s_{00},q_0$};
	
	\node[state]         (s10q0) [above right of=s00q0] {$s_{10},q_0$};
	\node[state]         (s11q0) [right of=s00q0, xshift=-0.9cm] {$s_{11},q_0$};
	\node[state]         (s01q0) [below right of=s00q0] {$s_{01},q_0$};
	\node[state]         (s20q0) [right of=s10q0] {$s_{20},q_0$};
	\node[state]         (s21q0) [right of=s11q0] {$s_{21},q_0$};
	\node[state]         (s02q1) [right of=s01q0] {$s_{02},q_1$};
	
	\node[state]         (s22q0) [right of=s21q0] {$s_{22},q_0$};
	
	\node[state]         (s00q1) [below of=s00q0, yshift=-4cm] {$s_{00},q_1$};
	\node[state]         (s10q1) [above right of=s00q1] {$s_{10},q_1$};
	\node[state]         (s11q1) [right of=s00q1, xshift=-0.9cm] {$s_{11},q_1$};
	\node[state]         (s01q1) [below right of=s00q1] {$s_{01},q_1$};
	
	\node[state]         (s20q1) [right of=s10q1] {$s_{20},q_1$};
	\node[state,accepting]         (s21q2) [right of=s11q1] {$s_{21},q_2$};
	
	\node[state]         (s22q1) [right of=s21q2] {$s_{22},q_1$};
	
	\node[state]         (s12q1) [below of=s21q2, yshift=0.8cm] {$s_{12},q_1$};

	\path 
	
	(s00q0) edge [bend left=15] node[anchor=south,sloped]{$\emptyset$} (s10q0)
	(s10q0) edge [bend left=15] node[anchor=south,sloped]{$\emptyset$} (s00q0)

	(s00q0) edge [bend left=15] node[anchor=north,sloped]{$\emptyset$} (s01q0)
	(s01q0) edge [bend left=15] node[anchor=north,sloped]{$\emptyset$} (s00q0)
	
	(s10q0) edge [bend left=15] node[anchor=west]{$\emptyset$} (s11q0)
	(s11q0) edge [bend left=15] node[anchor=west]{$\emptyset$} (s10q0)

	(s01q0) edge [bend left=15] node[anchor=west]{$\emptyset$} (s11q0)
	(s11q0) edge [bend left=15] node[anchor=west]{$\emptyset$} (s01q0)

	(s10q0) edge [left]      node[anchor=south]{\faBorderNone} (s20q0)
	(s20q0) edge [bend right]  node[anchor=south]{$\emptyset$} (s10q0)
	(s11q0) edge [left]      node[anchor=south]{$\stairwayupfilled$} (s21q0)
	(s21q0) edge [bend right]  node[anchor=south]{$\emptyset$} (s11q0)
	(s01q0) edge [left]      node[anchor=south]{\faCoffee} (s02q1)
	(s20q0) edge[left] node[anchor=east]{$\stairwayupfilled$} (s21q0)
	(s21q0) edge[in=-75, out=75,looseness=1] node[anchor=west]{\faBorderNone} (s20q0)
	(s21q0) edge[bend left] node[anchor=south]{\faBorderNone} (s22q0)
	(s22q0) edge[left] node[anchor=south]{$\stairwayupfilled$} (s21q0)
	(s22q0) edge[left] node[anchor=south,sloped]{\faCoffee} (s02q1)

	
	(s00q1) edge [bend left=15] node[anchor=south,sloped]{$\emptyset$} (s10q1)
	(s10q1) edge [bend left=15] node[anchor=south,sloped]{$\emptyset$} (s00q1)

	(s00q1) edge [bend left=15] node[anchor=north,sloped]{$\emptyset$} (s01q1)
	(s01q1) edge [bend left=15] node[anchor=north,sloped]{$\emptyset$} (s00q1)

	(s10q1) edge [bend left=15] node[anchor=west]{$\emptyset$} (s11q1)
	(s11q1) edge [bend left=15] node[anchor=west]{$\emptyset$} (s10q1)

	(s01q1) edge [bend left=15] node[anchor=west]{$\emptyset$} (s11q1)
	(s11q1) edge [bend left=15] node[anchor=west]{$\emptyset$} (s01q1)

	(s10q1) edge [left]      node[anchor=south]{\faBorderNone} (s20q1)
	(s20q1) edge [bend right]      node[anchor=south]{$\emptyset$} (s10q1)
	(s11q1) edge [left]      node[anchor=south]{$\stairwayupfilled$} (s21q2)
	
	(s20q1) edge[left] node[anchor=east]{$\stairwayupfilled$} (s21q2)
	(s22q1) edge[left] node[anchor=south]{$\stairwayupfilled$} (s21q2)
	(s22q1) edge[left] node[anchor=east, yshift=0.18cm, xshift=0.05cm]{\faCoffee} (s12q1)
	(s12q1) edge[in=-75, out=15,looseness=0.8] node[anchor=west, yshift=0.1cm]{\faBorderNone} (s22q1)
	
	(s01q1) edge [in=202, out=170,looseness=2.2,color=black]      node[anchor=west]{\faCoffee} (s02q1)
	
	(s02q1) edge [in=177, out=195,looseness=2.3,color=black]      node[anchor=east,color=black]{$\emptyset$} (s01q1)
	
	(s02q1) edge [in=0, out=-15,looseness=1.8,color=black]  node[anchor=west,yshift=0.5cm]{\faCoffee} (s12q1)
	
	(s11q0) edge [in=70, out=-10,looseness=1.4,color=black] node[anchor=south,sloped]{\faCoffee} (s12q1)
	
	(s11q1) edge [left]      node[anchor=north]{\faCoffee} (s12q1)
	
	
	(s00q1) edge [bend left=15] node[anchor=south]{$\emptyset$} (s11q1)
	(s11q1) edge [bend left=15] node[anchor=south]{$\emptyset$} (s00q1)

	(s00q0) edge [bend left=15] node[anchor=south]{$\emptyset$} (s11q0)
	(s11q0) edge [bend left=15] node[anchor=south]{$\emptyset$} (s00q0)

	(s00q0) edge [in=85, out=95,looseness=10] node[anchor=south]{$\emptyset$} (s00q0)
	
	(s10q0) edge [in=130, out=140,looseness=10] node[anchor=south]{$\emptyset$} (s10q0)
	
	(s01q0) edge [in=-130, out=-140,looseness=10] node[anchor=east]{$\emptyset$} (s01q0)
	
	(s20q0) edge [in=50, out=40,looseness=10] node[anchor=south]{\faBorderNone} (s20q0)
	
	(s21q0) edge [in=130, out=140,looseness=10] node[anchor=south,xshift=-0.1cm]{$\stairwayupfilled$} (s21q0)
	
	(s02q1) edge [in=-50, out=-40,looseness=10] node[anchor=west,yshift=-0.25cm,xshift=-0.15cm]{\faCoffee} (s02q1)
	
	(s22q0) edge [in=85, out=95,looseness=10] node[anchor=south]{\faBorderNone} (s22q0)

	(s00q1) edge [in=85, out=95,looseness=10] node[anchor=south]{$\emptyset$} (s00q1)
	
	(s10q1) edge [in=175, out=185,looseness=10] node[anchor=east]{$\emptyset$} (s10q1)
	
	(s01q1) edge [in=-130, out=-140,looseness=10] node[anchor=east]{$\emptyset$} (s01q1)
	
	(s20q1) edge [in=40, out=50,looseness=10] node[anchor=south,xshift=0.3cm,yshift=-0.1cm]{\faBorderNone} (s20q1)
	
	(s22q1) edge [in=85, out=95,looseness=10] node[anchor=south]{\faBorderNone} (s22q1)
	
	(s12q1) edge [in=-50, out=-40,looseness=10] node[anchor=north]{\faCoffee} (s12q1)
	
	;
\end{tikzpicture}}
  \subcaption{}
  \label{fig:test1prodmdp}
 \end{subfigure}
 \caption{A simple MDP (a) and TA (b). (c) The NFA underlying the product MDP constructed from this MDP and TA under a fully mixed policy $\pi$. MDP states $\mathcal{S}$ are labelled $s_{yx}$ (for $x, y \in \{0,1,2\}$) corresponding to the horizontal ($x$) and vertical ($y$) cell position in the grid.}
 \label{fig:test1}
\end{figure}

\newpage

\section{Baum-Welch Algorithm}
\label{sec:BW}

This section more completely introduces the Baum-Welch algorithm \citep{baum1966statistical} that we use to implement Step~1 of our TA learning pipeline. Recall there are two stages of Baum-Welch: i) learning the transition probability distribution $\hat{P}$ of the MDP $\mathcal{M}$ and ii) learning the transition probability distribution $P^\otimes$ of the product MDP $\mathcal{M}\otimes \auto$. Both follow the same procedure, where the result of the former is used for the initial parameters (an inductive bias) of the latter (see Section 3.1). Since it is the ultimate goal of this procedure, we spell out the procedure for the latter where the hidden Markov model is $\mathcal{P}^\otimes_\pi = (\mathcal{S}^\otimes, s_o^\otimes, P^\otimes_\pi, R^\otimes, \gamma, \mathcal{O}, Z)$ (constructed from the product MDP and a fully mixed policy $\pi$). Baum-Welch uses an `expectation-maximisation' method to find the hidden transition matrix and the observation (or emissions) distribution $Z$ that maximises the likelihood of obtaining the given set of observation sequences,
$$O = (o_1,..., o_T) \in (\mathcal{O}^+)^T.$$ 
Formally, we represent this optimisation problem as:
\begin{equation*}
    \langle \hat{P}^\otimes_\pi, \hat{Z}\rangle = \argmax_{\langle P^\otimes_\pi,Z\rangle} \Pr\left(O \mid \langle P^\otimes,Z \rangle  \right).
\end{equation*}

\noindent\textbf{Defining HMM Parameters.} The hidden Markov chain's state space is the set $\mathcal{S}^\otimes$ and, as is commonly done, we will view the the transition probability function $P^\otimes_\pi:\mathcal{S}^\otimes\times \mathcal{S}^\otimes\rightarrow [0,1]$ as a matrix, with elements
\begin{equation*}
    P^\otimes_{\pi,ij} \equiv P^\otimes(s^\otimes_t = s^\otimes_j\mid s^\otimes_{t-1}=s^\otimes_i).
\end{equation*}
This representation is induced by a bijective mapping, so no information is lost.

We do the same for the distribution $Z$, but call the matrix $E^Z$ (for `emissions from Z'):
\begin{equation*}
    E^Z_{ij} \equiv Z(o_t = o_i \mid s^\otimes_t = s^\otimes_j).
\end{equation*}
Here, $o_i$ represents the $i$-th element of the finite observation set $\mathcal{O}$ under any fixed ordering. Thus, $E^Z_{ij}$ is the probability that observation $\mathcal{O}_i$ is emitted when the hidden state is $s^\otimes_j$.

Under these definitions, $P^\otimes_\pi$ is an $|\mathcal{S}^\otimes|\times|\mathcal{S}^\otimes|$ matrix and $E^Z$ is an $|\mathcal{S}^\otimes|\times|\mathcal{O}|$ matrix.

Lastly, we define the initial state distribution $\rho \in [0,1]^{|\mathcal{S}^\otimes|}$ by
\begin{equation*}
    \rho_j \equiv \Pr[s^\otimes_0 = s^\otimes_j].
\end{equation*}
In our application, the starting state is always known to be the same, so this distribution will always have support 1.

\noindent\textbf{Initialisation.} We initialise $P^\otimes_\pi$ using the fact that we have an estimate of the transition matrix for the MDP $\hat{P}$ as an inductive bias and we initialise $E^Z$ according to the deterministic observation mapping $\langle s,q\rangle \mapsto \langle s, \chi_\mathcal{F}(q)\rangle$ (see Appendix D). 
\begin{align*}
    \sum_{\{s^\otimes_j : s^\otimes_j\in \mathcal{S}^\otimes\}} P^\otimes_{\pi, ij} &= 1\\
    \sum_{\{o_j: o_j\in \mathcal{O}\}} E^Z_{ij} &= 1.
\end{align*}

\noindent\textbf{Forward Procedure.} This first variable is the probability that the input model is in hidden state $s^\otimes_i$ at time $t$ in the given observation sequence $o_{t+1},\ldots,o_\mathcal{T}$:
\begin{equation*}
    \alpha_i(t)\equiv \Pr\left[o_1,\ldots,o_t,s^\otimes_t=s^\otimes_i\mid \langle P^\otimes_\pi,Z\rangle\right]
\end{equation*}
Then, the following recursive relations are used to compute these variables for later use in the update step:
\begin{align*}
    1.&~ \alpha_i(1) = \pi_i E^Z_{i,o_1} ~~~~~~~~~~~~~~~~~~~~~~~~~~~~~~~~~~~~~~~~~~~~~~~~\\
    2.&~ \alpha_i(t+1)=E^Z_{i,o_{t+1}}\sum_{j=1}^{|\mathcal{S}^\otimes|}\alpha_j(t)P^\otimes_{\pi,ji} ~~~~~~~~~~~~~~~~~~~~~~~~~~~~~~~~~~~~~~~~~~~~~~~~
\end{align*}

\noindent\textbf{Backward Procedure.} These variables are calculated backwards temporally for `smoothing' of the calculated probabilities:
\begin{equation*}
    \beta_i(t)\equiv \Pr\left[o_{t+1},\ldots,o_\mathcal{T}\mid s^\otimes_t=i, \langle P^\otimes_\pi,Z\rangle\right]
\end{equation*}
Then, the following recursive relations are used to compute these variables for use in the update step:
\begin{align*}
    1.&~ \beta_i(\mathcal{T}) = 1 ~~~~~~~~~~~~~~~~~~~~~~~~~~~~~~~~~~~~~~~~~~~~~~~~\\
    2.&~ \beta_i(t)=\sum_{j=1}^{|\mathcal{S}^\otimes|}\beta_j(t+1)P^\otimes_{\pi,ij}E^Z_{o_{t+1},j}. ~~~~~~~~~~~~~~~~~~~~~~~~~~~~~~~~~~~~~~~~~~~~~~~~
\end{align*}

\noindent\textbf{Calculate Update Parameters.} The first update parameter is
\begin{equation*}
    \gamma_i(t)= \Pr[s^\otimes_t=i\mid (o_t)_{t=1}^\mathcal{T}, \langle P^\otimes_\pi,Z\rangle] = \frac{\Pr[s^\otimes_t=s^\otimes_i,(o_t)_{t=1}^\mathcal{T}\mid \langle P^\otimes_\pi,Z\rangle]}{\Pr[(o_t)_{t=1}^\mathcal{T}\mid \langle P^\otimes_\pi,Z\rangle]} = \frac{\alpha_i(t)\beta_i(t)}{\sum_{j=1}^{|\mathcal{S}^\otimes|}\alpha_j(t)\beta_j(t)}.
\end{equation*}
Note that by the definitions of $\alpha_i(t)$ and $\beta_i(t)$, $\sum_{j=1}^{|\mathcal{S}^\otimes|}\alpha_j(t)\beta_j(t)$ is a sum over all possible hidden states which could occur at time $t$ that manifest the same observation sequence $(o_t)_{t=1}^\mathcal{T}$.

The other update parameter is
\begin{align*}
    \xi_{ij}(t) &= \Pr[s^\otimes_t=s^\otimes_i,s^\otimes_{t+1}=s^\otimes_j \mid (o_t)_{t=1}^\mathcal{T}, \langle P^\otimes_\pi,Z\rangle] \\&= \frac{\Pr[s^\otimes_t=i,s^\otimes_{t+1}=j, (o_t)_{t=1}^\mathcal{T} \mid \langle P^\otimes_\pi,Z\rangle]}{\Pr[(o_t)_{t=1}^\mathcal{T}\mid \langle P^\otimes_\pi,Z\rangle]}~~~~\textup{(by Bayes' Theorem)}\\
    &= \frac{\alpha_i(t)P^\otimes_{\pi,ij}\beta_j(t+1)E^Z_{o_{t+1},j}}{\sum_{u,v\in \mathcal{S}^\otimes}\alpha_u(t)P^\otimes_{\pi,uv}\beta_v(t+1)E^Z_{o_{t+1},v}},
\end{align*}
where the final denominator here is a sum over all possible hidden state transitions from time $t$ to $t+1$ which can give rise to the specific observation sequence $(o_t)_{t=1}^\mathcal{T}$.

\noindent\textbf{Update Step.} We calculate the actual model updates with the following equations:
\begin{align*}
    \Tilde{P}^\otimes_{ij} &= \frac{\sum_{t=0}^{\mathcal{T}-1} \xi_{ij}(t)}{\sum_{t=0}^{\mathcal{T}-1} \gamma_{i}(t)}
        = \frac{\sum_{t=0}^{\mathcal{T}-1} \Pr[s^\otimes_t=i,s^\otimes_{t+1}=j\mid (o_t)_{t=1}^\mathcal{T}, \langle P^\otimes_\pi,Z\rangle]}{\sum_{t=0}^{\mathcal{T}-1} \Pr[s^\otimes_t=i\mid (o_t)_{t=1}^\mathcal{T}, \langle P^\otimes_\pi,Z\rangle]}\\
        \Tilde{E}^Z_{ji} &=\frac{\sum_{t=0}^\mathcal{T} \chi_{o_t=j}(o_t)\gamma_i(t)}{\sum_{t=0}^\mathcal{T}\gamma_i(t)},
\end{align*}
    
where $\chi_{o_t=j}$ is the indicator function for $o_t=j\in\mathcal{O}$.
    
After this step, we re-define $P^\otimes_\pi:=\Tilde{P}^\otimes_\pi$ and $E^Z := \Tilde{E}^Z$, and repeat all the steps with the same observation data $(o_t)_{t=1}^\mathcal{T}$. Baum-Welch implementations usually keep track of the magnitude of changes from iteration to iteration, terminating when changes become negligible, or when a user-defined maximum number of iterations is reached.

\section{Mathematical Implementation of HMM Learning}\label{sec:imp}
Here we discuss any non-trivial linear algebra used by us to encode the models and data structures for the Baum-Welch algorithm.

As usual for MDPs, given the choice of some action $a\in\mathcal{A}$ at any time $t$, the transition matrix $\mathcal{T}_{P^\otimes_\pi}^a \in [0,1]^{|\mathcal{S}| |\mathcal{Q}|\times |\mathcal{S}| |\mathcal{Q}|}$ corresponding to $P^\otimes_\pi$ in $\mathcal{M}\otimes \auto$ is
\begin{equation}\label{eq:transtensor}
    \left(\mathcal{T}_{P^\otimes_\pi}^a\right)_{\langle s,q\rangle,\langle s',q'\rangle} \equiv P^\otimes_\pi\left(\langle s',q'\rangle\mid \langle s, q\rangle, a\right).
\end{equation}

The primary non-trivial linear algebra tool used is the \textit{Kronecker product}. Usually, this is denoted with the symbol $\otimes$, but we use $\otimes_K$ to avoid confusion with our notation for the product MDP.
\begin{definition}
The Kronecker product of two matrices $A, B$ is the block-matrix
\begin{equation*}
    A\otimes_K B \equiv 
        \begin{pmatrix}
        A_{11}B & A_{12}B & \ldots & A_{1m}\\
        A_{21}B & A_{22}B & & \\
        \vdots & & \ddots  & \\
        A_{n1}B& & & A_{nm}B
        \end{pmatrix},
\end{equation*}
where each $A_{ij}B$ is the $B$-sized matrix of the scalar $A_{ij}$ multiplied by $B$.
\end{definition}

The mathematical setup for converting a product MDP into an HMM is as follows (applies to any matrix-based implementation, this is not unique to our programmatic implementation).

We construct the product MDP according to Definition 4 and store it as a rank-3 tensor with each layer corresponding to the dynamics induced by a specific action. This implies that the dimensions of the tensor ($\mathcal{T}$ above) correspond to $|\mathcal{S}\times\mathcal{Q}| \times |\mathcal{S}\times\mathcal{Q}| \times |\mathcal{A}|$. Each `action layer' is equal to the matrix given by Equation \ref{eq:transtensor} above.

 We convert this product MDP to a Markov chain by summing and normalising to the fully mixed policy $\pi$. This results in an $|\mathcal{S}\times\mathcal{Q}| \times |\mathcal{S}\times\mathcal{Q}|$ transition matrix, representing the \textit{true} transition dynamics of the Hidden Markov model we will be looking to solve.

The HMM learning algorithm takes as input a set of observation sequences and initial guesses for the transition and emission dynamics of the system. Note that this requires assuming a fixed number of hidden states in the system. Since we know the number of MDP states, estimating the right number of hidden states corresponds to guessing the number of TA states $k$, which is unknown a priori. However, all that is actually required is that our guess $k$ is greater than or equal to the true number of TA states $|\mathcal{Q}|$ because guessing too many states will just generate duplication in the learnt matrix - which can easily be identified. It is easy to come up with an upper bound on the size of the underlying TA that is sufficient for there to be enough hidden states in the learned product MDP to capture the dynamics over $\mathcal{S} \times \mathcal{Q}$; for instance, we could base it on the cardinality of the set of labels or on some type of simplicity assumption that the task specification is unlikely to have more than 10 states perhaps. We then initialise \texttt{P\_initial\_estimate} as a $k|\mathcal{S}| \times k|\mathcal{S}|$ stochastic matrix by taking the Kronecker product between a $k \times k$ identity matrix and $\hat{P}_\pi$ (the estimate for the MDP's transition matrix after being induced by the fully mixed policy $\pi$). 

The observations seen consist of the MDP state component $s$, label $L(s)$, and reward. This corresponds to setting the true emissions matrix (also known as the observation probability matrix) $E^Z$ to the block matrix
\begin{equation*}
    E^Z_k :=
        \begin{blockarray}{ccc}
        & \mathcal{S}\times\{r=0\} & \mathcal{S}\times\{r=1\}\\
        \begin{block}{c(cc)}
            \mathcal{S}\times \{q_0\} & I & \textbf{0} \\
            \mathcal{S}\times \{q_1\} & I & \textbf{0} \\
            \vdots & \vdots & \vdots\\
            \mathcal{S}\times \{q_{k-1}\}& I & \textbf{0} \\
            \mathcal{S}\times \{q_k\} & \textbf{0} & I\\
        \end{block}
        \end{blockarray} = 
        \begin{pmatrix}
        1&0\\
        1&0\\
        \vdots&\vdots\\
        1&0\\
        0&1
        \end{pmatrix}
        \otimes_K
        I_{|\mathcal{S}|\times|\mathcal{S}|},
\end{equation*}
    where $I_{|\mathcal{S}|\times|\mathcal{S}|}$ is the $|\mathcal{S}|\times|\mathcal{S}|$ identity matrix, and the adjacent binary matrix is $k\times 2$. Thus, for all product states with TA state $q\neq q_k$, the observation is just the physical state and reward 0, i.e. $\langle s,0\rangle$, and in the case $q=q_k$, the TA is an accepting state (by the assumption that there is one accepting TA state and that we number it with $k$)\footnote{In our application, accepting states cannot be left after achieving the goal, so it suffices in every case to have just one accepting state.}, so the observation for $\langle s,q_k\rangle$ is $\langle s, 1\rangle$. Note that the mission output from each specific product state is deterministic in our setting.

\section{Further experimental details}

Table \ref{tab:exp1_params} contains the key hyperparameters used in our experiments and presents the precise mean times to convergence across 3 runs of each experiment.

\begin{table}
 \caption{Parameters and results for the experiments with results displayed in Figures 4a and 4b, where the mean time to convergence for the Baum-Welch algorithm over three runs was measured for varying grid-world and TA sizes.}
    \centering
    \scalebox{1}{
    \begin{tabular}{| c | c | c | c | c | r |}
    \hline
    Algorithm & Grid size & TA size & \multicolumn{1}{|p{1cm}|}{\centering Episode\\Length} & No. Episodes & \multicolumn{1}{|p{1.7cm}|}{\centering Convergence\\Time (s)}\\
    \hline
    Ours & $3 \times 3$ & 3 & 34 & 275 & 26.3\\
    Ours & $3 \times 3$ & 4 & 34 & 275 & 382.3\\
    Ours & $3 \times 3$ & 5 & 70 & 500 & 5,232.7\\
    BF & $3 \times 3$ & 3 & 34 & 275 & 63.7\\
    BF & $3 \times 3$ & 4 & 34 & 275 & 18,188.3\\
    BF & $3 \times 3$ & 5 & 70 & 500 & Timeout (>150,000)\\
    Ours & $4 \times 4$ & 3 & 80 & 500 & 660.7\\
    Ours & $4 \times 4$ & 4 & 90 & 1000 & 3,824.7\\
    Ours & $4 \times 4$ & 5 & 80 & 1000 & 38,025.0\\
    Ours & $5 \times 5$ & 3 & 85 & 2000 & 2,625.7\\
    Ours & $5 \times 5$ & 4 & 100 & 2000 & 16,819.7\\
    Ours & $5 \times 5$ & 5 & 140 & 2000 & 68,977.7\\
    \hline
    \end{tabular}}
    \label{tab:exp1_params}
\end{table}

\section{Comparisons with Related work}

We attempted to benchmark our algorithm against several existing passive DFA learning libraries. Two well-established and commonly-used approaches to this are the RPNI algorithm (used in, e.g., \cite{xu2020joint,xu2021active}) and SAT-based algorithms (used in e.g., \citep{deepsynth,xu2020joint,verginis2022joint, abadi2020learning, corazza2022reinforcement}). The results of our comparisons are summarised in Table \ref{tab:DFA_libraries}. Of the six implementations we tried, only the SAT-based approach from the libalf library was able to successfully learn the 3-state TA in our 3x3 gridworld. As Table \ref{tab:DFA_libraries} and Figure 4b demonstrate, our algorithm runs significantly faster than the RPNI algorithm indicating its better efficiency compared to existing approaches. Additionally, most of the libraries we attempted to run using the same dataset that we used for our experiments were not able to learn the correct TA, as evidenced by the results in \ref{tab:DFA_libraries}. This highlights the general applicability of our approach in passively learning DFAs.

\begin{table}
\caption{Results from attempted comparisons to existing passive DFA learning libraries.}
    \centering
    \begin{tabular}{| c | c | c | l |}
        \hline
        Library & Algorithm & Result & URL \\
        \hline
        AALpy & RPNI & Wrong DFA (Fig. 3) & \url{https://github.com/DES-Lab/AALpy} \\
        dfa-identify & SAT-based & Timeout & \url{https://github.com/mvcisback/dfa-identify} \\
        Inferrer & RPNI & Recursion error & \url{https://github.com/steynvl/inferrer} \\
        FlexFringe & SAT-based & Throws error & \url{https://github.com/tudelft-cda-lab/FlexFringe} \\
        Libalf & RPNI & Wrong DFA & \url{https://github.com/libalf/libalf}\\
        Libalf & SAT-based & Works & \url{https://github.com/libalf/libalf}\\
        \hline
    \end{tabular}
    \label{tab:DFA_libraries}
\end{table}

Learning an automaton from data is an NP-complete problem \citep{gold1978complexity}, so under standard complexity assumptions, the convergence time will, in the worst case, always grow exponentially with instance size. Nevertheless, as remarked in our experimental section, a 5x5 MDP grid world with a 5-state TA is comparable to that of related research. For example, in \citep{abadi2020learning} their largest problem is the ``Maze domain'' with a five-state TA, MDP size 4x4 and in \citep{gaonNMR} their largest is a five-state TA and a 5x5 MDP. Other works such as \citep{furelos2020induction,xu2020joint,lauffer2022learning} do go to slightly larger MDP worlds, but they employ SAT-based approaches, which assume knowledge of the underlying environment, unlike ours.

\begin{figure}
    \includepdf[pages=-,scale=0.8]{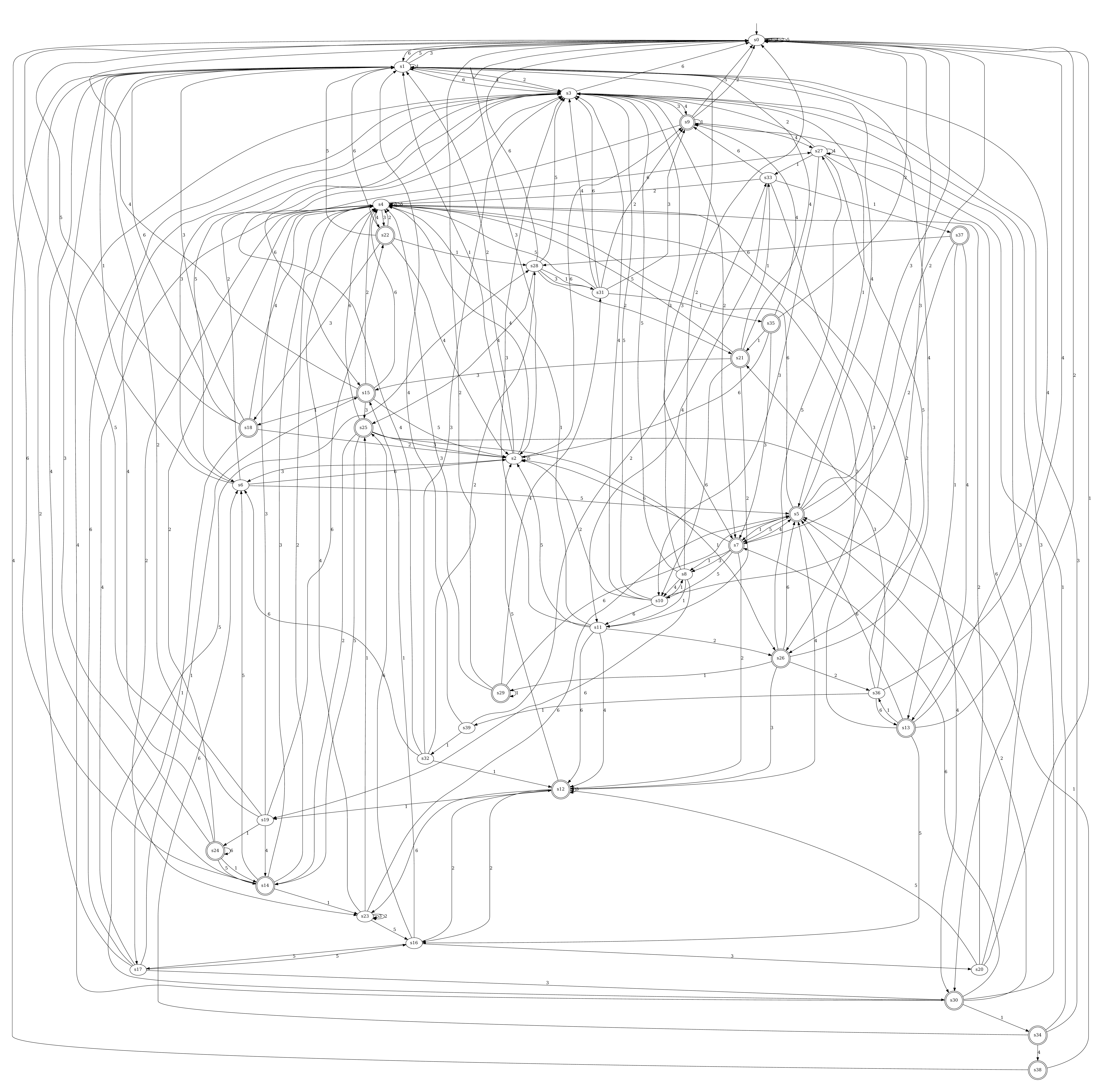}
    \vspace{60em}  
    \caption{TA learnt using the AALpy library with the same data set used to learn the 3-state TA in the 3x3 gridworld.}
    \label{fig:AALpy_TA}
\end{figure}

\end{document}